\def\PTD{\textsc{P-TD}\xspace}
\def\PTDD{\textsc{P-TD}$^D$\xspace}
\def\PTDAK{\textsc{P-TD}$^{AK}$\xspace}
\def\DTD{\textsc{D-TD}\xspace}
\def\IPTD{\textsc{iP-TD}\xspace}
\def\MAS{\textsc{MAS}\xspace}
\def\DSTD{\textsc{DS}\xspace}
\def\CRH{\textsc{CRH}\xspace}
\def\BPTD{\textsc{IBP}\xspace}
\def\EVTD{\textsc{EVD}\xspace}
\def\TT{\textsc{TOP2}\xspace}
\def\TTEXP{\textsc{EXP}\xspace}
\def\CATD{\textsc{CATD}\xspace}
\def\GTM{\textsc{GTM}\xspace}
\def\KDEm{\textsc{KDE}\xspace}
\def\UA{\textsc{UA}\xspace}
\def\OA{\textsc{OA}\xspace}
\newcommand{\inpaper}[1]{}
\def\etal{\textit{et al.}\xspace}
\newtheorem{theorem}{Theorem}
\newtheorem{corollary}[theorem]{Corollary}
\newtheorem{lemma}[theorem]{Lemma}
\newtheorem{claim}[theorem]{Claim}
\newtheorem{definition}{Definition}
\newenvironment{rtheorem}[1]{\medskip\noindent\textbf{Theorem~\ref{#1}. }\begin{itshape}}{\end{itshape}\medskip}
\newenvironment{rlemma}[1]{\medskip\noindent\textbf{Lemma~\ref{#1}. }\begin{itshape}}{\end{itshape}\medskip}
\newenvironment{rcorollary}[1]{\noindent\textbf{Corollary~\ref{#1}. }\begin{itshape}}{\end{itshape}}
\newcommand{\sproof}{\noindent{\bf Proof.}\hspace*{1em}}
\def\eps{{\varepsilon}}
\def\literalqed{{\ \nolinebreak\hfill\mbox{\quad}}}
\long\def\symbolfootnote[#1]#2{\begingroup%
\def\thefootnote{\fnsymbol{footnote}}\footnote[#1]{#2}\endgroup} 
\def\calT{{\cal T}}
\def\calX{{\cal X}}
\def\eps{\epsilon}
\DeclareMathOperator{\argmin}{\text{argmin}}
\newcommand{\labeq}[2]{\begin{equation}\label{eq:#1}#2\end{equation}}
\newcommand{\ind}[1]{\llbracket #1 \rrbracket}
\def\({\left(}
\def\){\right)}
\newcommand{\tup}[1]{\left\langle #1 \right\rangle}
\def\ol{\overline}
\renewcommand{\ind}[1]{\left\llbracket #1 \right\rrbracket}
\newcommand{\xqed}{\mbox{\raggedright $\Diamond$}}
\def\xqedhere{\hfill\xqed}
\def\newpar{\vspace{-3mm}\paragraph}
\def\subsec{\subsection}
\def\etal{\emph{et al.}}
\def\cal{\mathcal}
\def\cite{\citep}
\def\shortcite{\citeyearpar}
\newcommand{\full}[2]{
\ifnum\isfull=1{#1}\else{#2}\fi}
\definecolor{darkgreen}{rgb}{0,0.6,0}
\newcommand{\kibitz}[2]{\ifnum\Comments=1{\color{#1}{#2}}\fi}
\newcommand{\rmr}[1]{\kibitz{blue}{[RESHEF:#1]}}
\newcommand{\lirong}[1]{\kibitz{orange}{[LX:#1]}}
\newcommand{\tsv}[1]{\kibitz{purple}{[TSVIEL:#1]}}
\newcommand{\cut}[1]{}
\def\({\left(}
\def\){\right)}
\def\newpar{\vspace{-0mm}\paragraph}
\newcommand{\newsec}[1]{\vspace{-0mm}\section{#1}\vspace{-0mm}}
\def\subsec{\subsection}
\newcommand{\hatv}[1]{{\ddot{#1}}}
\def\bm{\boldsymbol}
\newcommand{\hatvm}[1]{\hat{\bm{#1}}}
\newtheorem*{theorem*}{Theorem}
\def\calY{\mathcal{Y}}
\def\calZ{\mathcal{Z}}
\def\bias{\bm{b}}
\def\b{b}
\def\vpl{\vphantom{$a_{a_{a_{a_{a_{a_{a_a}}}}}}$}}
\def\vph{\vphantom{$a^{a^{a^{a^{a^a}}}}$}}
\newcommand{\aboveUA}[1]{{\color{gray}{{#1}}}}
\newcommand{\inrange}[1]{{\textbf{#1}}}
\newcommand{\best}[1]{{\underline{\textbf{#1}}}}
\def\irrel{$-$}
\title{Frustratingly Easy Truth Discovery}
\author {
    Reshef Meir,\textsuperscript{\rm 1} 
    Ofra Amir,\textsuperscript{\rm 1}
    Omer Ben-Porat,\textsuperscript{\rm 1} 
    Tsviel Ben-Shabat,\textsuperscript{\rm 1} 
    Gal Cohensius,\textsuperscript{\rm 1}
    Lirong Xia\textsuperscript{\rm 2}
}
\begin{document}
\maketitle

	\begin{abstract}
Truth discovery is a general name for a broad range of statistical methods aimed to extract the correct answers to questions, based on multiple answers coming from noisy sources. For example, workers in a crowdsourcing platform.
In this paper, we consider an extremely simple heuristic for estimating workers' competence using  average proximity to other workers. We prove that this  estimates well the actual competence level and enables separating high and low quality workers in a wide spectrum of domains and statistical models. Under Gaussian noise,  this simple estimate is the unique solution to the Maximum Likelihood Estimator with a constant regularization factor.  

Finally,  weighing workers according to their average proximity in a crowdsourcing setting, results in  substantial improvement over unweighted aggregation and other truth discovery algorithms in practice.
\smallskip

\begin{small}
	\begin{quote}
    ``All happy families are alike; each unhappy family is unhappy in its own way."
\flushright--- Leo Tolstoy, Anna Karenina
\end{quote}
\end{small}
	\end{abstract}

	
\vspace{-0mm}
\newsec{Introduction} \label{sec:intro} 
Consider a standard crowdsourcing task such as identifying which images contain a person or a car~\cite{deng2014scalable}, or identifying the location in which pictures were taken~\cite{NASA}. 
Such tasks are also used to construct large datasets that can later be used to train and test machine learning algorithms. Crowdsourcing workers are usually not experts, thus answers obtained this way often contain many mistakes \cite{vuurens2011much,wais2010towards}, and multiple answers are aggregated to improve accuracy. 

From a theory/statistics perspective, ``truth discovery" is a general name for a broad range of methods that aim to extract some underlying ground truth from noisy answers. While the mathematics of truth discovery dates back to the early days of statistics, at least to the \emph{Condorcet Jury Theorem}~\cite{condorcet1785essai}, the rise of crowdsourcing platforms suggests an exciting modern application of aggregating \emph{complex labels} from varied domains such as image processing and  natural language, to healthcare. For example, the Etch-a-Cell project uses volunteers to trace the boundary of tumors on Electron Microscopy images (\cite{spiers2021deep}, see Fig.~\ref{fig:etch_example}). 

Yet, the vast majority of the theoretical literature on truth discovery follows 
Condorcet by focusing on binary, multi-label or sometimes real-valued questions (see Related Work section), while specific applications with complex labels often rely on specialized algorithms. 

Many of these algorithms aim to identify first the most competent workers. While some of them employ highly sophisticated analysis, others are much more direct: for example, \citet{kobayashi2018frustratingly} suggests a `frustratingly easy' algorithm that ranks workers by their \emph{average cosine similarity} to others in a text summarization task; and \citet{kurvers2019detect} prove that the \emph{Hamming distance} of a worker from others is correlated with her competence in answering yes/no questions.  Of course, using average similarity or distance is not a new  idea, and is extensively employed outside the context of aggregation, for example in \emph{Games with a Purpose}~\cite{von2008designing,huang2013enhancing} to identify outliers, and in peer prediction to incentivize effort~\cite{witkowski2013dwelling}. 

In this paper we argue that average similarity is a powerful tool, with nothing special about Cosine or Hamming similarity in particular. 
 Our main observation can be written as follows:
\begin{theorem*}[Anna Karenina principle, informal]\label{thm:AK}
The expected average similarity of each worker to all others, is roughly linearly increasing in her competence.
\end{theorem*}
 Essentially,  the theorem says that as in Tolstoy's novel, ``good workers are all alike,''  whereas ``each bad worker is bad in her own way'' and thus not  similar to other workers.

  \subsec{Contribution and paper structure}

 

 \lirong{This sentence is a little weak. Can we say more explicitly how our work is better than previous work? For example, can we briefly mention the models and say that the Anna Karenina principles provide a systematic way of estimating fault, which often leads algs with good theoretical guarantee (e.g. consistency), and often outperform existing algs in practice based on large-scale exp?}\rmr{better?}

 After the preliminary definitions in  Section~\ref{sec:prelim}, we prove a formal version of the Anna Karenina principle and show how it can be used to identify poor workers in Section~\ref{sec:AK} without assuming specific label structure. 
 We show how additional assumptions lead to tighter corollaries of exactly or approximately linear relation between  pairwise similarity and competence. 
 To the best of our knowledge these are the first formal guarantees on  general-domain truth discovery. 
  
 
In Section~\ref{sec:AWG} we focus on the widely studied case of Gaussian noise.  We prove that  the average distance to other workers coincides with the Maximum Likelihood Estimator (MLE) for workers' (in)competence---the first guarantee of this type regarding average similarity or distance.

 In Section~\ref{sec:PTD} we explain how to leverage the Anna Karenina principle for aggregation  using a simple algorithm (\PTD). We  demonstrate  on real and synthetic data, that \PTD   substantially improves aggregation accuracy, competing well with advanced and domain-specific algorithms. 

Most proofs, as well as additional empirical results are \full{ relegated to  the appendices.}{ available in the full version of the paper on arXiv: \\ \url{https://arxiv.org/abs/1905.00629}.}

\subsection{Related work}\label{sec:related}The Condorcet Jury Theorem~\cite{condorcet1785essai} was perhaps the first formal treatment of truth discovery, and extensions   to  experts with heterogeneous competence levels were surveyed by \citet{grofman1983thirteen}. 
The idea of estimating workers' competence in order to improve aggregation is thus underlying many of the algorithms in the area (a recent survey is in \cite{li2016survey}). 
We should note that \emph{self-reporting} of accuracy often leads to poor results~\cite{gadiraju2017using,prelec2017solution}.

\newpar{Average similarity}
\rmr{the Science paper on binary labels, Gal's workshop paper w/o theoretical guarantees, practical use in crowdsourcing}
We have mentioned in the introduction the two applications of average similarity to truth discovery that we are aware of. Both of them assume a specific label structure and (somewhat surprisingly) both are quite recent: 
Kobayashi~\shortcite{kobayashi2018frustratingly} proved that cosine similarity  approximates a known kernel density estimator. \citet{kurvers2019detect} focused on \emph{binary questions with independent errors}, showing both theoretically and empirically that the expected average \emph{Hamming proximity} correlates with the true competence, albeit without comparing to any other algorithm. 

Our Anna Karenina theorem entails the Kurvers et al.  result  as a special case, and provides explicit performance guarantees for the heuristic suggested by Kobayashi.
\rmr{
 We show how their theoretical result can be derived from our Anna Karenina theorem as a special case.}

\newpar{Domain-specific algorithms}
Many truth-discovery algorithms have been proposed for specific label structures, mostly for categorical (multiple-choice) and real-valued labels. Often these algorithms entwine accuracy and ground truth  estimation, by iteratively aggregating labels to obtain an estimate of the ground truth, and using that in turn to estimate workers competence. This approach was pioneered by the EM-style Dawid-Skene estimator~\cite{dawid1979maximum}, with many follow-ups~\cite{karger2011iterative,gao2013minimax,aydin2014crowdsourcing,xiao2016truth,zhao2012probabilistic,
li2012truth}. 

Another class of algorithms  uses spectral methods to infer the competence and/or other latent variables from the covariance matrix of the workers~\cite{parisi2014ranking,zhang2016spectral}, or from their pairwise Hamming similarity~\cite{li2018incorporating}.  Note that covariance can also be thought of as a measure worker similarity in the context of binary labels. 
In rank aggregation, every voting rule can be considered as a truth-discovery algorithm~\cite{mao2013better,caragiannis2013noisy}.

Some of these works also provide formal convergence guarantees and/or bounds on the error that are subject to assumptions on the distribution of answers.

\newpar{General labels}
When there are complex labels that are not numbers or categories, but for example  contain text, graphics and/or hierarchical structure, there may not be a natural way to aggregate them but we would still want to evaluate workers' competence. 


Two recent papers suggest to use the pairwise distance (or similarity) matrix as a general domain-independent abstraction, then applying sophisticated algorithms on this matrix:
  The \emph{multidimensional annotation scaling} (MAS) model~\cite{braylan2020modeling} extends the Dawid-Skene model by calculating the labels and competence levels that would maximize the likelihood of the observed distance matrix, using the  Stan probabilistic programming language; 
Another approach is to find a `core' of good workers~\cite{kawase2019graph}, by looking for a dense subgraph of the similarity  matrix.

While we adopt the approach that \emph{pairwise similarity is the right domain-independent abstraction} for general labels, we argue that usually there is \underline{no need} for such complex algorithms: a `frustratingly easy' average is sufficient. 
\newsec{Preliminaries}\label{sec:prelim}

We consider a set $N$ of $n$ workers, 
each providing a report in some space $Z$. We denote elements of $Z$ (typically $m$-length vectors, see below) in \textbf{bold}.
 Thus, an instance of a truth discovery is a pair $\tup{S=(\bm s_i)_{i\in N},\bm z}$, where $\bm s_i\in Z$ is the report of worker $i$, and $\bm z\in Z$ is the \emph{ground truth}. $S$ is also called a \emph{dataset}.\footnote{It is ok if $\bm s_i$ is a partial vector, as long as there is enough intersection between pairs of workers.}    
 
 \newpar{Noise model}
 We do not make any assumptions regarding the ground truth $\bm z$. The \emph{type} $t_i$ of a worker determines her distribution of answers.
  A dataset is constructed in two steps: 
  \begin{enumerate}[label=(\arabic*)]
      \item  Sample a finite \emph{population} of  workers  i.i.d from a distribution $\calT$  (called a \emph{proto-population}) over a set of types $T$. For our running example, suppose that $\calT$ is  uniform  over $[50,200]$, $n=5$ and sampled types are $\vec t\!=\!(55, 80, 100,120,165)$, where lower types will provide better estimation in expectation (note that we use an arrow accent for $n$-length vectors).
      \item Workers each report their answers $S$, which depend on the ground truth $\bm z$, on their types, and on a random factor. $\bm z$ and $S$ for our example are shown in Table~\ref{tab:s_example} and Fig.~\ref{fig:s_example_main}. \full{See a more detailed example in  Appendix~\ref{apx:example}.}{}
  \end{enumerate}
  

Formally, a \emph{noise model} is a function $\calY:Z\times T\rightarrow \Delta(Z)$. That is,
the report of worker $i$ is  a random variable $\bm s_i$ sampled from $\calY(\bm z,t_i)$. 
 We note that $\calT$, $\calY$ and $\bm z$ together induce a distribution $\calY(\bm z,\calT)$ over answers (and thus over datasets), where $\bm s \sim \calY(\bm z,\calT)$ means we first sample a type $t\sim \calT$ and then a report $\bm s\sim \calY(\bm z,t)$.

The data in our example (Table~\ref{tab:s_example}) was sampled from the noise model $\calY$ that is a multivariate independent Normal distribution with mean $\bm z$ and variance $t_i$.  This is known as \emph{Additive White Gaussian noise} (AWG, see~\cite{diebold1998elements}). 





\full{
We can also think of more general models: e.g. where $t_i$ is a covariance matrix (capturing dependency among questions); where the type of a worker may include a constant bias $\bm\alpha_i$ (thus $\bm s_i=\bm z+\bm\alpha_i +\bm\eps_i$); or where different noise is added depending on the ground truth. 
}{}
\newpar{Workers' competence} Competent workers are close to the truth. More formally, 
given some ground truth $\bm z$ and a distance measure $d$, we define the \emph{fault} (or \emph{incompetence}) of a worker~$i$ as her expected distance from the ground truth, denoted $f_i(\bm z) := E_{\bm s_i\sim \calY(\bm z,t_i)}[d(\bm s_i,\bm z)]$.

\begin{figure}
\centering
 \includegraphics[width=6cm]{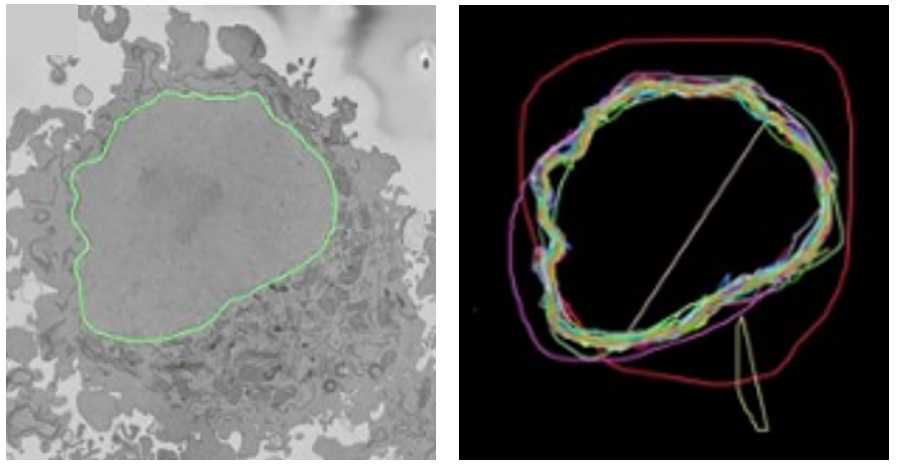}
  \caption{\label{fig:etch_example}The Etch-a-Cell project. Left: an Electron Microscopy (EM) image of a cell. The real boundary of the tumor is marked in green. Right: multiple annotations by volunteers. Images taken from~\cite{spiers2021deep}.}
\end{figure}

\begin{table}
 \begin{scriptsize}
 \begin{center}
  $ \begin{array}{c|c|cccc|c}
  i ~\backslash~ j&  & 1 & 2 & 3& 4& d(\bm s,\bm z)\\ \hline
  & t_i~\backslash~ z_j & 80 & 0 & 40 & -10 & \\
  \hline
1	&	55	&	81	&	6	&	41	&	-14	&	13.5	\\
2	&	80	&	89	&	-6	&	35	&	4	&	84.5	\\
3	&	100	&	105	&	-18	&	39	&	-5	&	243.7	\\
4	&	120	&	68	&	9	&	62	&	-10	&	177.2	\\
5	&	165	&	67	&	20	&	58	&	-20	&	248.2	\\
			\hline
UA(S)&		&	82	&	2.2	&	47	&	-9	&	14.7	\\
    \end{array}$
    \end{center}
    \end{scriptsize}\vspace{-0mm}
  
    \caption{\label{tab:s_example}An example of a dataset  sampled from the AWG model. The  bottom row is showing aggregated results using the unweighted mean. The rightmost column shows the \emph{error}, i.e. the distance of every row from the ground truth. 
    }
    \vspace{-0mm}
    \end{table}
    g
    \begin{figure}[t]
 \includegraphics[width=8.4cm]{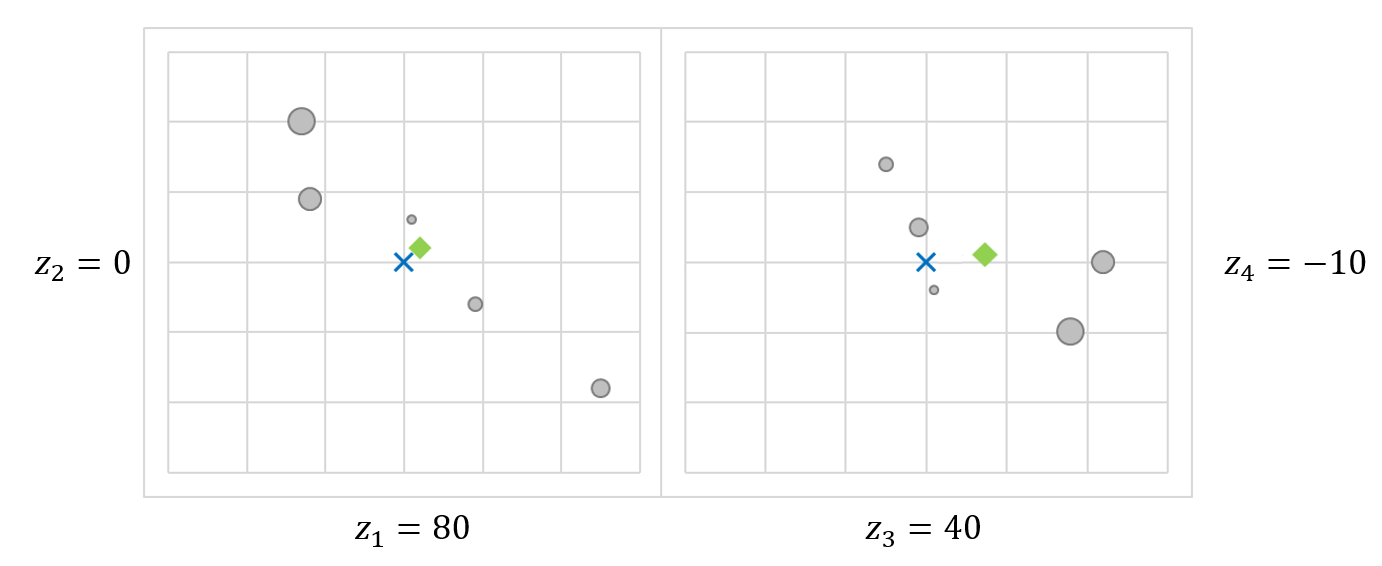}
  \caption{\label{fig:s_example_main} A graphical representation of Table~\ref{tab:s_example} (for obvious reasons we use two 2-dimensional plots instead of a 4-dimensional one). The blue X marks the ground truth. Workers' reports are marked by gray circles, whose size is proportional to $t_i$ (so smaller circles tend to be closer to the truth). The mean $\UA(S)$ is marked by a green diamond.}
    \vspace{-0mm}
    \end{figure}
 We denote by $\mu_\calT(\bm z):=E_{\bm s\sim \calY(\bm z,\calT)}[d(\bm s,\bm z)]$ the mean fault, omitting $\calT$ and/or $\bm z$ when clear from the context. 

Distance measures can often be derived from an inner product. Formally, consider an arbitrary symmetric inner product space  $(Z,\tup{\cdot,\cdot})$. 
This  induces a norm $\|\bm x\|^2:=\tup{\bm x,\bm x}$ and a distance measure  $d(\bm x,\bm y):=\|\bm x-\bm y\|^2$ (not necessarily a metric). A special case of interest is the {normalized Euclidean product} on $Z=\mathbb R^m$, defined as $\tup{\bm x,\bm y}_E:=\frac1m\sum_{j\leq m}x_jy_j$; and the corresponding 
\emph{normalized squared Euclidean distance} (NSED), a natural  way to capture the dissimilarity of two items~\cite{carter1989partitioning}.  
Note that the fault of a worker in the AWG model under NSED is  her \emph{variance}, as $f_i(\bm z)=t_i$ for any $\bm z$.

\full{Indeed:
\begin{align}
\label{eq:AWG_vAR}
f_i&(\bm z)=E_{\bm s_i\sim \calY(\bm z,t_i)}[d_E(\bm s_i,\bm z)]
\\
&=E_{\bm s_i\sim \calY(\bm z,t_i)}[\sum_{j\leq m}( s_{ij}- z_j)^2]= VAR_{\bm s_i\sim \calY(\bm z,t_i)}[\bm \eps_i]=t_i.\notag
\end{align}
}{}

\newpar{Aggregation}
Given an instance $\tup{S,\bm z}$, an aggregation function returns predicted labels $\hatvm z$.  We define the  \emph{error}  as $d(\bm z, \hatvm z)$. For example \emph{unweighted aggregation} in the real-valued domain simply returns the mean of workers' answers. The goal of truth discovery is to find algorithms that return labels with low expected error, see Table~\ref{tab:s_example}. 

When the type of every worker is known,  for many noise models there are accurate characterizations of the optimal aggregation functions. For example, the best linear unbiased estimator under the AWG model with NSED is taking the mean of workers' answers, inversely weighted by their variance~\cite{aitkin1935least}. 




\newsec{Fault Estimation}\label{sec:AK}

Our key approach is relying on estimating $f_i$ using the average distance of worker~$i$ from all other workers. Formally, we define $d_{ii'} := d(\bm s_i, \bm s_{i'})$, and the \emph{average pairwise distance} is

\vspace{-0mm}
\begin{equation}\label{eq:pi}
    \pi_i  :=\frac{1}{n-1}\sum_{i'\in N\setminus\{ i\}}d_{ii'}.\end{equation}

Next, we analyze the relation between $\pi_i=\pi_i(S)$ (which is a random variable) and $f_i$, which is an inherent property that is deterministically induced by the worker's type.
For an element $\bm s\in Z$ we consider the induced noise variable $\bm\eps_{\bm s}:=\bm s-\bm z$. We denote by $\tilde\calY(\bm z,t)$ the distribution of $\bm\eps_{\bm s}$ (where $\bm s\sim \calY(\bm z,t)$). Thus under NSED we have that $d(\bm s,\bm z)=\|\bm\eps_{\bm s}\|^2$.

We  define $\bias_i(\bm z) := E_{\bm\eps_i\sim \tilde\calY(\bm z,t_i)}[\bm\eps_i]$  as the  \emph{bias} of a type~$i$ worker, and $\bias_{\calT}(\bm z):=E_{\bm\eps\sim \tilde\calY(\bm z,\calT)}[\bm\eps]$ as the mean bias of the proto-population.   E.g. in Euclidean space $\bias_i(\bm z)$  is a vector where $\b_{ij}(\bm z)>0$ if $i$ tends to overestimate the answer of question $j$, and negative values mean underestimation.  

Our main conceptual result is an approximately linear connection between the expectations of $\pi_i$ and $d(\bm s_i,\bm z)$.
\begin{theorem}[Anna Karenina Principle]\label{thm:AK_general}
\vspace{0mm}
$$E_{S\sim \calY(\bm z,\calT)^n}[\pi_i|t_i,\bm z] = f_i(\bm z) + \mu_{\calT}(\bm z)-2\tup{\bias_i(\bm z),\bias_{\calT}(\bm z)}.$$
\end{theorem}
 We can also see this linear relation in three datasets (with different labels and distance measures) on Fig.~\ref{fig:corr_all_main}.
\begin{figure}[t]
    \centering\vspace{-0mm}
    \includegraphics[width=0.45\textwidth]{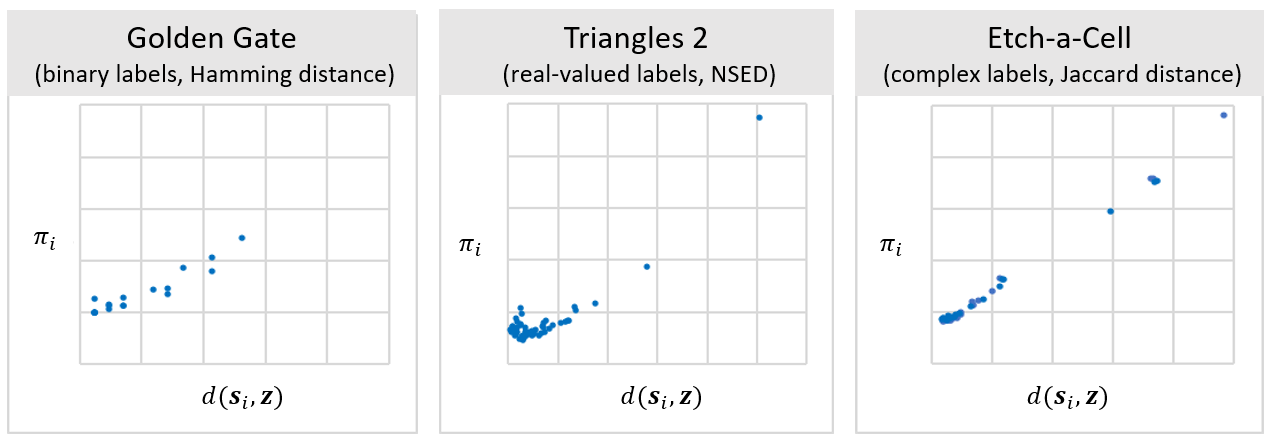}
   \caption{Realized average  distance $\pi_i$ vs. error. Each point is a worker.}
    \label{fig:corr_all_main}
\end{figure}

The proof is rather  straight-forward, and is relegated  to \full{ Appendix~\ref{apx:proofs}, with  examples in Appendix~\ref{apx:empirical_more}}{the full version of the paper}. In particular it shows by direct computation  that the expectation of $d_{ii'}$ for every pair of workers is 
\labeq{dii'}
{E[d_{ii'}|t_i,t_{i'},\bm z]= f_i(\bm z) + f_{i'}(\bm z) - 2 \tup{\bias_i(\bm z),\bias_{i'}(\bm z)}.}

Despite (or perhaps because of) its simplicity, the principle above is highly useful for estimating workers' competence. If  $\pi_i$ is roughly linearly increasing in $f_i$,  a na\"ive  approach to estimate $f_i$ from the data is by setting $\hat f_i$ to be some increasing function of $\pi_i$. 

However there are several obstacles we need to overcome in order to get theoretical guarantees. 

\full{
\begin{description}
    \item[Concentration bounds] I.e., showing that w.h.p the average similarity is close to its expected value, and thus approximates the competence.
    
    \item[Dependency on the ground truth] As long as $E[\pi_i]$ depends on $\bm z$, we have a chicken and egg problem, since we do not know the truth. 
    We will show conditions under which $E[\pi_i]$ does not depend on $\bm z$ at all. 
    \item[Biases] The value depends on the bias $\bias_i$ which is part of the unknown type. Estimating or bounding this bias is a major part of our work.
    \item[Population parameters] Another dependency is on $\mu_{\calT}$ and $\bias_{\calT}$, which depend on the population. These parameters can often be easily guessed or estimated. We will also consider a conservative approach that assigns a default value of $0$. \rmr{remove?}
\end{description}

}{
In particular: concentration bounds; 
estimation of $\mu$; and the biases that appear in the last term; all of which will be tackled in the next sections. 
}

For concreteness, we assume in the remainder of the paper, except where explicitly stated otherwise, that the inner product space $(Z,\tup{})$ is  $\mathbb R^m$, and  $d$ is NSED.

\newpar{Concentration bounds}How far is the empirical average $\pi_i$ from its expectation?
We show that when the noise on all questions is independent and bounded, the probability of a large estimation error decreases linearly with the sample size $\min\{n,m\}$. \full{ For details see Appendix~\ref{apx:sample}.}



\subsec{Domain-independent bounds}\label{sec:no_model}
\rmr{shorten}
What can be said without symmetry or other assumptions on the model? We argue that we can at least tell particularly poor workers from good workers. 

\begin{corollary}\label{cor:separate_GB}
Consider a ``bad" worker $i^*$ with $f_{i^*}>9 \mu_\calT$, and a ``good" worker $i^{**}$ with $f_{i^{**}}<\mu_\calT$. Then $E[\pi_{i^{*}}]>E[\pi_{i^{**}}]$. No better separation is possible (i.e. there is an instance where all the inequalities  become equalities). 
\end{corollary}
The proof relies on the following lemma, which is itself a corollary of the Anna Karenina principle (Thm.~\ref{thm:AK_general}) and the Cauchy-Schwarz inequality: 
\begin{lemma}\label{lemma:pi_LB_UB}
For any worker $i$ and any $\gamma\geq 0$, if $f_i=\gamma \mu$, then $E[\pi_i|t_i]\in (1\pm\sqrt\gamma)^2 \mu$.
\end{lemma}

\begin{proof}[Proof of Corollary~\ref{cor:separate_GB}]
By Lemma.~\ref{lemma:pi_LB_UB}, $E[\pi_{i^{**}}]\leq 4\mu < E[\pi_{i^{*}}]$.

Without further assumptions, this condition is tight. To see why, consider a population on $\mathbb R$ where $z=0$.  The good worker~$i^{**}$ provides the fixed report $s_{i^{**}}=-1$, the poor worker $i^{*}$ provides the fixed report $s_{i^{*}}=3-\delta$. However the measure of types $t_{i^*},t_{i^{**}}$ in $\calT$ is 0, and w.p. 1 type $t'$ is selected with a fixed report $s=1$.
Note that $\mu=(s-z)=1$, and thus $f_{i^{**}}=1=\mu$ whereas $f_{i^*}=9=9\mu$.

However, the reports of $i^*, i^{**}$ are completely symmetric around $1$, in the absence of more workers there is no way to distinguish between these two workers, by their disparity or otherwise. 
\end{proof}

In the special case of interest when there are only two types of workers (a situation known as ``Hammer-spammer"~\cite{karger2011iterative}), Lemma~\ref{lemma:pi_LB_UB} enables us to separate good from bad workers even more easily. This essentially depends on the fraction of bad workers and on their bias. \full{See Appendix~\ref{apx:proofs} for details and proofs.}{}

\newpar{Symmetric Noise}
A trivial implication of Theorem~\ref{thm:AK} is when the average worker is unbiased: 
\begin{corollary}[Anna Karenina principle for zero bias]\label{cor:AK_sym}
If 
 $\bias_\calT\!=\!0$ then $E[\pi_i|t_i]=f_i\!+\!\mu_\calT$ for all $i$.
\end{corollary}
This means that given enough samples, we can  retrieve workers' exact fault  level with high accuracy, by setting $\hat f_i:=\pi_i(S)-\hat \mu$.  This will be important later on when we discuss aggregation. 

\medskip What if we use other distance measures than NSED? Suppose that $d$ is an \emph{arbitrary distance metric} over  space $Z$, $\bm z \in Z$ is the ground truth, and $\bm s_i\in Z$ is the report of worker~$i$. $f_i$ and $\pi_i$ are defined as before. 
Intuitively, we say that the noise model $\calY$ is \emph{symmetric} if  
for every point $\bm x$ there is an equally-likely point that is on ``the other side" of $\bm z$ (note that this in particular implies zero bias).

\begin{theorem}[Anna Karenina principle for symmetric noise and distance metrics]~\label{thm:AK_metric}
If $d$ is any distance metric and $\calY$ is symmetric, then $\max\{\mu,f_i\} \leq E[\pi_i|t_i] \leq \mu+ f_i$.
\end{theorem}


An immediate corollary of Theorem~\ref{thm:AK_metric} is that for poor workers with $f_i\geq \mu$,  the average distance $\pi_i$ is a 2-approximation for $f_i$ (up to noise). See details and proof in \full{Appendix~\ref{apx:metric}.}{the full version.}


\subsec{Domain-specific results}
\newpar{Binary labels}
Kurvers \etal~\shortcite{kurvers2019detect} considered the average similarity of workers when answering a set of yes/no questions, and the type of a worker is her probability $p_i$ to answer correctly independently over each question, a model known as the \emph{one-coin} model or the \emph{Dawid-Skene} model.

They showed that the (expected) average similarity is an increasing linear function  of $p_i$. 

Interestingly, the  result from \cite{kurvers2019detect} can also be obtained directly from Theorem~\ref{thm:AK_general}, by plugging in the  Hamming distance (which is just NSED on the binary cube $\{-1,1\}^d$ instead of $\mathbb R^d$). This result can also be easily extended to multiple-choice labels. For details see \full{Appendix~\ref{apx:binary}.}{the full version.}


\newpar{Cosine similarity}
When  label vectors are normalized, we have that $d(\bm x,\bm y)=2(1-cos(\bm x,\bm y))$, meaning that ranking workers by decreasing average cosine similarity (as suggested in \cite{kobayashi2018frustratingly}) is the same as ranking them by increasing average NSED. Our results above provide sufficient conditions for when this separates good workers from poor ones. 




\rmr{add a paragraph on aggregation? We can just say we weigh agents based on estimated competence. Then add a similar paragraph in the CONT section saying we weigh them inversely to the variance, and showing equivalence and consistency}

	\newsec{AWG Model and Maximum Likelihood}\label{sec:AWG}
\full{By imposing the same variance on all dimensions, the model assumes that questions are equally difficult.\footnote{In the AWG model the equal-difficulty assumption is a normative decision, since we can always scale the data. Essentially, it means that we measure errors in standard deviations, giving the same importance to all questions.}}{}



    Since AWG has no bias, we know from Cor.~\ref{cor:AK_sym} that 
	$E[\pi_i(S)|t_i]  = f_i + \mu$. 
Thus if we have a good estimate $\hat \mu$ of $\mu$, setting $\hat f_i:=\pi_i-\hat \mu$ is a reasonable heuristic. 
In this section we show that under a slight relaxation of the AWG model, tweaking the heuristic above provides the MLE for $f_i$.

\medskip
Denote 
$\bar \mu :=  \frac{1}{2n}\sum_{i\in N}\pi_i(S)$, 
that is,  half the average pairwise distance. Note that for unbiased workers, we have by Cor.~\ref{cor:AK_sym} that $E[\frac1n\sum_{i\in N}\pi_i(S)]=2\mu$, and thus $\bar\mu$ is an unbiased estimator of $\mu$.
We thus set $\hat f^{NP}_i := \pi_i-\bar \mu$, where $NP$ stands for Na\"ive Proxy.

\newpar{Computing the MLE}
From Eq.~\eqref{eq:dii'} and zero-bias, we have that
$E[d_{ii'}|t_i,t_{i'}]=f_i + f_{i'}$. However even under the AWG model, the pairwise distances are correlated. 
For the analysis, we will neglect these correlations, and assume  that the pairwise distances are all independent conditional on workers' types. More formally, under this \emph{`pairwise' Additive White Gaussian} (pAWG)  model, 
$d_{ii'} = f_i+f_{i'} + \eps_{ii'}$, where all of $\eps_{ii'}$ are sampled i.i.d. from  a normal distribution with mean 0 and unknown variance. Ideally, we would like to find $\hatv f:=(\hat f_i)_{i\in N}$ that minimize the estimation errors $(\eps_{ii'})$,\footnote{We  use $\hatv f$ instead of hat+arrow accent.} which is an Ordinary Least Squares (regression) problem. 

We next show how to derive  a closed form solution for the maximum likelihood estimator of the fault $\vec f$, also allowing for a regularization term with coefficient $\lambda$.\lirong{Technically with the regularizer it is not MLE} The theorem might have independent interest as it allows us to estimate a matrix created from adding a vector to itself (an ``outer sum'' matrix) from its off-diagonal entries. 
\begin{theorem}\label{thm:RLS}Let $\lambda\geq 0$, and  $D=(d_{ii'})_{i,i'\in N}$ be an arbitrary symmetric nonnegative matrix. Then
$$\argmin_{\hatv f}\!\!\!\!\!\!\!\!\sum_{i,i'\in N: i\neq i'}\!\!\!\!\!\!(\hat f_i+\hat f_{i'}- d_{ii'})^2 +\lambda \|\hatv f\|^2_2 = \frac{2(n-1)\vec \pi\! -\! \frac{8n(n-1)}{4n+\lambda-4}\bar \mu}{2n+
\lambda-4}.$$
\end{theorem}

Our main technical result in this paper follows as a direct corollary of the above theorem, when the matrix $D$ represents pairwise distances:
\begin{theorem}\label{thm:pAWG_opt}For  $\lambda=4$, 
the regularized maximum likelihood estimator of $\vec f$ in the pAWG model is proportional to $\hatv f^{NP}$. 
\end{theorem}
That is, our heuristic estimate  $\hatv f^{NP}$ is in fact the optimal solution of a regularized pAWG model.

By setting $\lambda=0$ (no regularization) we get a slight variation of this heuristic  $\hatv f^{ML}:=\vec \pi-\frac{n}{n-1}\bar \mu$ (for `maximum likelihood').  

In Fig.~\ref{fig:PTD_compare} we compare implementations of our truth discovery  algorithm (defined in Section~\ref{sec:PTD}, derived with different values of $\lambda$.  As expected, more regularization leads to better performance on a small dataset, whereas the unregularized version is optimal in the limit.
\begin{figure}[t]
    \centering
    \includegraphics[width=8.2cm]{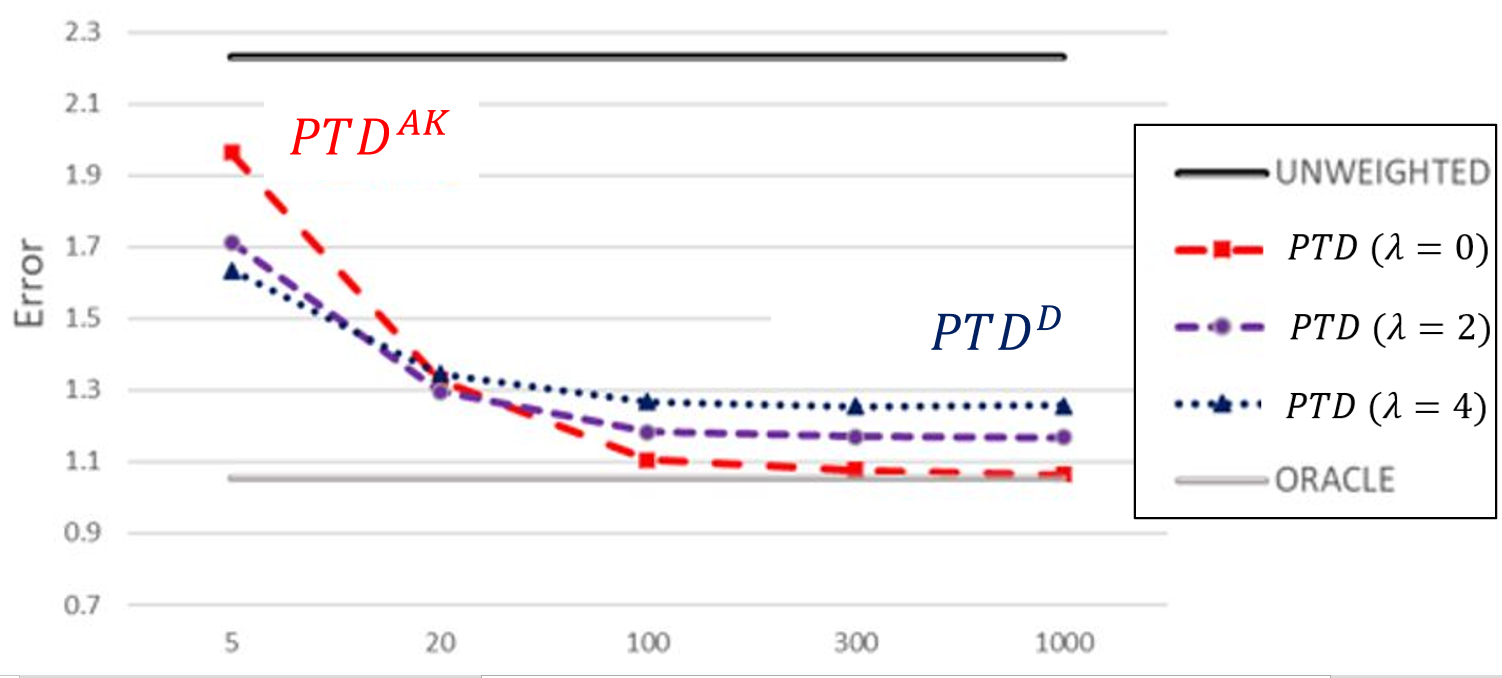}
    \caption{A comparison of the \PTD algorithm variants  on a synthetic real-valued dataset. The x-axis shows the number of questions $m$, whereas the number of workers is fixed ($n=5$). `ORACLE' weighs workers according to their true competence.}
    \label{fig:PTD_compare}
\end{figure}
\rmr{show the graph here? full proof?}


\begin{proof}[Proof of Theorem~\ref{thm:RLS} for $\lambda=0$]
Let $P$ be a list of all $n^2-n$ ordered pairs of $[n]$ (without the main diagonal) in arbitrary order. 
Setting $\lambda=0$, we are left with the following least squares equation:
$$\min_{\hatv f}\sum_{i,i'}(\hat f_i+\hat f_{i'}- d_{ii'})^2 = \min_{\hatv f} \|A\hatv f-\bm d\|_2^2,\vspace{-0mm}$$
where $A$ is a $|P|\times n$ matrix with $a_{ki}=1$ iff $i\in P_k$, and $\bm d$ is a $|P|$-length vector with $d_k = d_{i{i'}}$ for $P_k=(i,{i'})$. 

Fortunately, $A$ has a very specific structure that allows us to obtain the above closed-form solution. Note that every row of $A$ has exactly two `1' entries, in the row index $i$ and column index ${i'}$ of $P_k$; the total number of `1' is $2(n^2-n)$; there are $2n-2$ ones in every column; and every two distinct columns $i,{i'}$ share exactly two non-zero entries (at rows $k$ s.t. $P_k=(i,{i'})$ and $P_k=({i'},i)$). 
This means that $(A^T A)$ has $2n-2$  on the diagonal and $2$ in any other entry.

The optimal solution for ordinary least squares is obtained at $\hatv f$ such that $(A^T A)\hatv f = A^T \bm d$.
By the structure of $A$:
\labeq{ATD}
{[A^T\bm d]_i = 2\sum_{{i'}\neq i} d_{i{i'}} = 2\pi_i,\vspace{-0mm}
}
and
\vspace{-4mm}
\labeq{ATAf}
{[(A^T A)\hatv f]_i =  2(n-2)\hat f_i+2\sum_{i'\in N}\hat f_{i'}.}
Denote 
\vspace{-3mm}
\labeq{alpha1}
{\alpha := \sum_{i\in N} [A^T\bm d]_i = \sum_{i\in N} 2\sum_{{i'}\neq i} d_{i{i'}} =4n(n-1)\bar \mu,\vspace{-0mm}}

then by Eq.~\eqref{eq:ATAf}:
\begin{align}
\alpha&=\sum_{i \in N} [(A^T A)\hatv f]_i 
= \sum_{i\in N}  [2(n-2)\hat f_i+2\sum_{i' \in N} \hat f_{i'}] \label{eq:alpha2}\\
&= 2(n-2)\sum_{i\in N} f_i + 2\sum_{i'\in N} f_{i'} (\sum_{i\in N} 1)
= 4(n-1)\sum_{i\in N}\hat  f_i.\notag
\end{align}

We can now write the $n$ linear equations as
\begin{small}
\begin{align*}
   & 2(n-1)\pi_i\!=\![A^T\bm d]_i \!=\![(A^T A)\hatv f]_i \tag{By Eqs.~\eqref{eq:ATD},\eqref{eq:ATAf}}\\
   &=(2n-4)\hat f_i + 2\sum_{i'\in N}\hat f_{i'}&\iff&\\ 
    & (n-1)\pi_i =  (n-2)\hat f_i+\sum_{i'\in N} \hat f_{i'} &\iff&\\
     &\hat f_i = \frac{\pi_i}{n\!-\!2}- \frac{1}{n\!-\!2}\sum_{i'\in N} \hat f_{i'} = \frac{\pi_i}{n\!-\!2}-\frac{1}{n\!-\!2}\frac{\alpha}{4(n\!-\!1)} 
    \\
     &\stackrel{Eq.~\eqref{eq:alpha1}}{=}   \frac{n-1}{n-2}\pi_i - \frac{n}{n-2}\bar\mu  = \frac{n-1}{n-2}\hat f^{ML}_i,
\end{align*}
\end{small}
as required, since $\frac{n-1}{n-2}$ is a constant.
\end{proof}

\newsec{Aggregation}\label{sec:PTD}


\rmr{write the PTD}

Our Proximity-based Truth Discovery (\PTD) algorithm is a direct adaptation of the Anna Karenina principle.
The idea is very simple: 
\begin{enumerate}
    \item Compute the average distance [or similarity] $\pi_i$ of every worker;
    \item Estimate fault [or competence] $\hatv f$ from $\vec \pi$;
    \item Aggregate answers, giving higher weight to workers with low fault [high competence].
\end{enumerate}
\begin{algorithm}[t]
\SetAlgoNoLine
\caption{\textsc{(\PTDD) for real-valued data}}
\label{alg:PTDD}
\KwIn{Dataset $S\in \mathbb R^{n \times m}$.}
\KwOut{Est. fault levels  $\hatv f\in\mathbb R^n$; answers $\hatvm z\in \mathbb R^m$.}
Compute $d_{ii'} \leftarrow d(\bm s_i,\bm s_{i'})$ for every pair of workers\;
\For {each  worker $i\in N$}{ 
    set $\pi_i \leftarrow \frac{1}{n-1}\sum_{i'\neq i}d_{ii'}$\tcp*{Step~1}
    }
Set $\bar \mu \leftarrow  \frac{1}{2n}\sum_{i\in N}\pi_i$\;
\For{each  worker $i\in N$}{ 
    Set $\hat f_i \leftarrow \pi_i-\frac{n}{n-1}\bar\mu$\tcp*{Step~2}
    Set $w_i \leftarrow \frac{1}{\hat f_i}$\;
}
Set $\hatvm z\leftarrow \frac{\sum_i w_i \bm s_i}{\sum_i w_i}$\tcp*{Step~3}
\Return $(\hatv f, \hatvm z)$\;
\end{algorithm}

Our default implementation (denoted \PTDD) simply sets weights proportional to the estimated competence, which is in turn proportional to the average similarity, as in \cite{kobayashi2018frustratingly,kurvers2019detect}. 

As we make more assumptions on the structure of labels and the statistical model, we can use an appropriate Anna-Karenina theorem to improve  Step~2, resulting in a domain-specific implementation \PTDAK. For example, Alg.~\ref{alg:PTDD} shows the implementation for the real-valued domain, where Step~2 is based on $\hatv f^{ML}$ defined in Sec.~\ref{sec:AWG}, and Step~3 is based on \cite{aitkin1935least}. \full{See Appendix~\ref{apx:algs} for details on other domain-specific implementations.}{} 

Lastly, we can iteratively repeat the process by computing the \emph{weighted} average distance to other workers. This iterative $\PTD$ algorithm is denoted by \IPTD.

\def\altmu{\check{\mu}}

\newsec{Empirical Evaluation}\label{sec:empirical}
    
\newpar{Algorithms}
We compare the predicted label accuracy of our algorithms (\PTDD,\PTDAK,\IPTD) to unweighted aggregation (\UA); to three general-domain algorithms: \MAS~\cite{braylan2020modeling}, \TT and \TTEXP~\cite{kawase2019graph};  and to  domain-specific algorithms: \CRH~\cite{li2014resolving}, \BPTD~\cite{karger2011iterative}, \DSTD~\cite{dawid1979maximum}, \EVTD~\cite{parisi2014ranking}, \CATD~\cite{li2014confidence}, \GTM~\cite{zhao2012probabilistic}, and \KDEm~\cite{wan2016truth}. 

\newpar{Datasets} We used the following datasets from five different domains. We write the used distance measure in each domain in brackets. 
\begin{description}
\item[Categorical (Hamming distance):]
 GG, DOGS, FLAGS~\cite{shah2015double}; Predict~\cite{mandal2020} (we used data from Oct.8, under all four treatments); and
all six categorical datasets from \cite{kawase2019graph}. We used weighted majority for aggregation.
\item[Real-valued (NSED):] BUILDINGS (collected for this paper); TRI~\cite{hart2018statistical}; and EMO~\cite{snow2008cheap}. Answers aggregated using weighted mean.
\item[Ranking (Kendall-tau):]   DOTS  and PUZZ contain subjective rankings of four images of dots / 8-puzzle boards, according to the number of dots they contain / number of steps from solution ~\cite{mao2013better}. We also extracted the ranking information from BUILDINGS.
For aggregation, we used nine different ordinal voting rules, see \full{Appendix}{full version} for details. 
    \item[Language (GLEU):] The TRANSL dataset contains English translations of Japanese sentences~\cite{braylan2020modeling}. The distance measure we used is GLEU, and there is no aggregation (best worker is selected).
    \item[Outlines (Jaccard):] The  Etch-a-Cell dataset contains bitmaps of the outline of a tumor in 2D slices of a cell~\cite{spiers2021deep} (see Fig.~\ref{fig:etch_example}). We use Jaccard distance on the filled shapes, and aggregate labels using pixelwise-majority.
\end{description} 
In addition we generated synthetic datasets using the AWG model (real-valued); the one-coin model (categorical); and Mallows model (ranking). In the HS datasets there are 20\% `hammers'. 

  A detailed description of algorithms' implementation and datasets is in  \full{Appendix~\ref{apx:empirical_more}}{the full version of the paper}. 

To obtain robust results we sampled $n$ workers and $m$ questions without repetition from each dataset (real or synthetic), and repeated the process at least 1000 times for every combination.

\begin{table*}
\centering
    \begin{small}
    \begin{tabular}{|l|l||l|l|l|l||l|l|l||l|l|l|}
    \hline
Categorical&k&{\CRH}&{\BPTD}&{\DSTD}&{\EVTD}&{\TTEXP}&{\TT}&{\MAS}&{\PTDD}&{\PTDAK}&{\IPTD}\\					\hline
SYN.HS&2&{-14\%}&\aboveUA{+6\%}&\inrange{-23\%}&{-20\%}&{-15\%}&{-13\%}&{-17\%}&{-21\%}&\best{-24\%}&\inrange{-24\%}\\					
SYN.N&2&{-4\%}&\aboveUA{+12\%}&\inrange{-9\%}&{-6\%}&{-7\%}&\inrange{-8\%}&{-5\%}&\inrange{-7\%}&\best{-9\%}&\inrange{-9\%}\\					
GG&2&{-8\%}&{-15\%}&\inrange{-19\%}&{-10\%}&{-14\%}&\best{-20\%}&{-14\%}&{-14\%}&\inrange{-18\%}&{-18\%}\\					
Pred.T1&2&\inrange{-1\%}&\aboveUA{+9\%}&\inrange{-1\%}&\inrange{-1\%}&\inrange{-1\%}&\aboveUA{+0\%}&\inrange{-1\%}&\inrange{-1\%}&\inrange{-1\%}&\best{-1\%}\\					
Pred.T2&2&\inrange{-0\%}&\aboveUA{+10\%}&\aboveUA{+0\%}&\best{-1\%}&\inrange{-1\%}&\aboveUA{+1\%}&\aboveUA{+0\%}&\inrange{-0\%}&\inrange{-1\%}&\inrange{-1\%}\\					
Pred.T3&2&\inrange{-1\%}&\aboveUA{+8\%}&\inrange{-1\%}&\inrange{-1\%}&{-0\%}&\aboveUA{+1\%}&\inrange{-1\%}&\inrange{-2\%}&\inrange{-2\%}&\best{-2\%}\\					
Pred.T4&2&{-1\%}&\aboveUA{+7\%}&{-2\%}&\best{-4\%}&{-1\%}&{-1\%}&{-2\%}&\inrange{-3\%}&\inrange{-3\%}&\inrange{-3\%}\\

					\hline
SYN&4&{-18\%}&{\irrel}&{\irrel}&{\irrel}&{-22\%}&{-29\%}&{-28\%}&{-33\%}&{-33\%}&\best{-41\%}\\					
DOGS&10&{-1\%}&{\irrel}&{\irrel}&{\irrel}&{-2\%}&\aboveUA{+0\%}&\inrange{-3\%}&\inrange{-4\%}&\inrange{-4\%}&\best{-4\%}\\					
FLAGS&4&{-17\%}&{\irrel}&{\irrel}&{\irrel}&{-29\%}&\best{-31\%}&{-27\%}&{-21\%}&{-22\%}&\inrange{-30\%}\\					
Chinese&2&{-2\%}&{\irrel}&{\irrel}&{\irrel}&{-1\%}&\inrange{-3\%}&\inrange{-4\%}&\inrange{-4\%}&\inrange{-4\%}&\best{-5\%}\\					
English&2&\inrange{-1\%}&{\irrel}&{\irrel}&{\irrel}&\inrange{-1\%}&\inrange{-1\%}&\inrange{-1\%}&\inrange{-1\%}&\inrange{-1\%}&\best{-1\%}\\					
IT&4&{-3\%}&{\irrel}&{\irrel}&{\irrel}&{-4\%}&\inrange{-5\%}&\inrange{-6\%}&\inrange{-5\%}&\inrange{-5\%}&\best{-7\%}\\					
Medicine&4&{-3\%}&{\irrel}&{\irrel}&{\irrel}&{-13\%}&\best{-16\%}&{-9\%}&{-8\%}&{-8\%}&{-12\%}\\					
Pokemon&6&{-11\%}&{\irrel}&{\irrel}&{\irrel}&{-25\%}&\best{-27\%}&{-17\%}&{-22\%}&{-22\%}&\inrange{-27\%}\\					
Science&5&\inrange{-1\%}&{\irrel}&{\irrel}&{\irrel}&\aboveUA{+1\%}&\aboveUA{+1\%}&\inrange{-1\%}&\inrange{-3\%}&\inrange{-3\%}&\best{-3\%}\\					
					\hline
Real-valued&&{\CRH}&{\CATD}&{\GTM}&{\KDEm}&{\TTEXP}&{\TT}&{\MAS}&{\PTDD}&{\PTDAK}&{\IPTD}\\					\hline
		SYN.N&\irrel&{-23\%}&{-27\%}&{-14\%}&\aboveUA{+6\%}&{-7\%}&{-11\%}&\aboveUA{+4\%}&{-20\%}&\best{-41\%}&{-25\%}\\					
BUILD&\irrel&{-9\%}&\aboveUA{+5\%}&{-9\%}&\aboveUA{+1\%}&{-8\%}&\aboveUA{+12\%}&\aboveUA{+4\%}&{-7\%}&{-7\%}&\best{-11\%}\\					
TRI1&\irrel&\inrange{-19\%}&{-7\%}&{-14\%}&{-4\%}&{-17\%}&{-3\%}&{-8\%}&{-17\%}&{-10\%}&\best{-20\%}\\					
TRI2&\irrel&{-9\%}&{-6\%}&{-5\%}&{-4\%}&\inrange{-11\%}&\best{-12\%}&\aboveUA{+1\%}&{-7\%}&{-4\%}&\inrange{-11\%}\\					
EMO&\irrel&\aboveUA{+1\%}&\aboveUA{+18\%}&\aboveUA{+1\%}&\aboveUA{+26\%}&\aboveUA{+5\%}&\aboveUA{+22\%}&\aboveUA{+3\%}&\aboveUA{+0\%}&\aboveUA{+5\%}&\aboveUA{+2\%}\\

\hline	
				
    \end{tabular}
    \end{small}    
    \caption{Results (RI) on categorical and real-valued datasets, with $n=10$ workers and $m=15$ questions \full{(more sizes in the appendix)}{}. The best result in each row is  \underline{underlined}, and results that are not statistically different (within 95\% confidence interval in a paired t-test) are marked in \textbf{bold}. Results in {\color{gray}{gray}} are worse than unweighted aggregation. \vspace{-0mm}}
    \label{tab:CAT_CONT}
 \end{table*}

\begin{figure*}[t]
    \centering
    \includegraphics[width=1.0\linewidth]{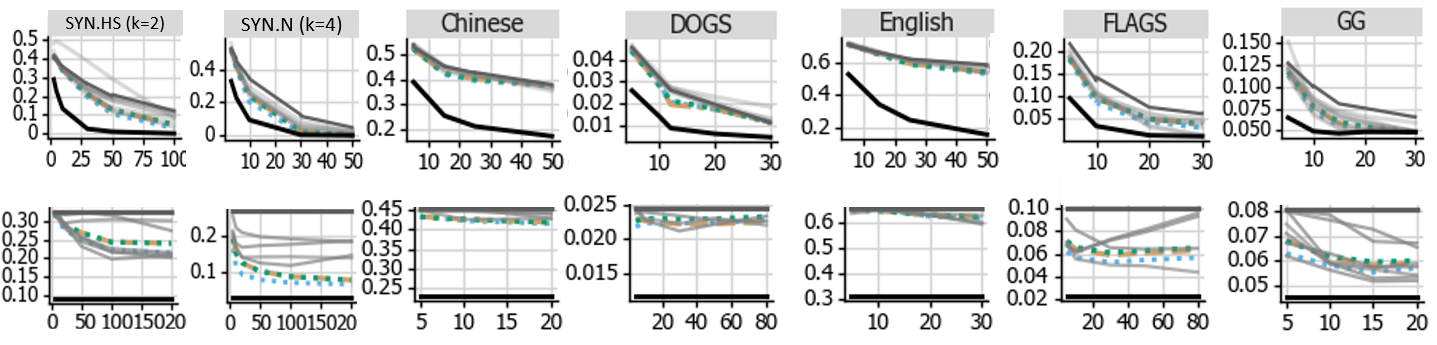}
    \caption{The error on some of the real world datasets as we increase the number of voters $n$ with fixed $m=15$ as in the table (top row), or vary the number of questions $m$ with fixed $n=10$ (bottom row). The black line is the error of an \emph{Oracle} who knows the true fault values $f$ and uses optimal weights. The thin gray lines are all competing algorithms.\vspace{-0mm}}
    \label{fig:CAT_CONT}
\end{figure*}

\begin{figure}[t]
    \centering
    \includegraphics[width=1\linewidth]{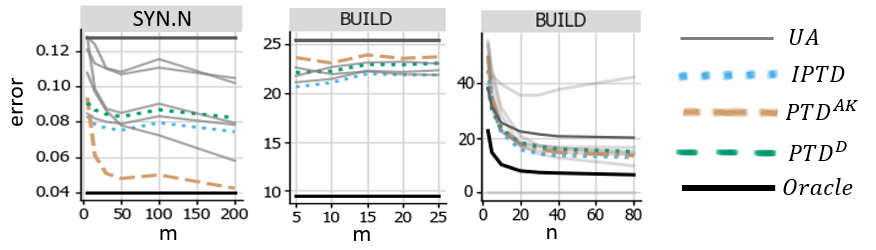}\vspace{-0mm}
    \caption{The error  on real-valued datasets as we vary $m$ (keeping $n=10$) or $n$ (keeping $m=15$).\vspace{-0mm}}
    \label{fig:CONT}
\end{figure}

\begin{table}[t]
    \begin{scriptsize}
    \begin{tabular}{|l|l||l|l|l||l|l|}
    \hline
		Ranking&v. rule&{\TTEXP}&{\TT}&{\MAS}&{\PTDD}&{\IPTD}\\		\hline	
	SYN.HS&Borda&{-5\%}&\best{-24\%}&{-14\%}&{-4\%}&{-10\%}\\			
	SYN.N&Borda&{-3\%}&\aboveUA{+6\%}&\aboveUA{+1\%}&\inrange{-4\%}&\best{-6\%}\\			
	BUILD&Borda&\aboveUA{+0\%}&\aboveUA{+0\%}&\aboveUA{+0\%}&\best{-0\%}&\aboveUA{+0\%}\\			
	DOTS3&Borda&\aboveUA{+2\%}&\aboveUA{+7\%}&\aboveUA{+2\%}&\best{-1\%}&\aboveUA{+1\%}\\			
	DOTS5&Borda&\aboveUA{+1\%}&\aboveUA{+9\%}&\aboveUA{+4\%}&\inrange{-1\%}&\best{-1\%}\\			
	DOTS7&Borda&\aboveUA{+1\%}&\aboveUA{+13\%}&\aboveUA{+5\%}&\best{-3\%}&\inrange{-2\%}\\			
	DOTS9&Borda&{-3\%}&\aboveUA{+7\%}&{-1\%}&{-6\%}&\best{-10\%}\\			
	PUZZ5&Borda&\aboveUA{+1\%}&\aboveUA{+1\%}&\aboveUA{+8\%}&\best{-2\%}&\inrange{-2\%}\\			
	PUZZ7&Borda&{-10\%}&{-7\%}&{-18\%}&{-15\%}&\best{-20\%}\\			
	PUZZ9&Borda&\aboveUA{+6\%}&\aboveUA{+48\%}&\aboveUA{+14\%}&\best{-5\%}&{-1\%}\\			
	PUZZ11&Borda&{-0\%}&\aboveUA{+6\%}&\aboveUA{+3\%}&\best{-3\%}&\inrange{-3\%}\\			
		\hline		
	SYN.N&Plurality&\inrange{-4\%}&{-2\%}&\best{-5\%}&\inrange{-4\%}&\inrange{-5\%}\\			
	BUILD&Plurality&{-1\%}&\aboveUA{+20\%}&{-2\%}&\inrange{-6\%}&\best{-6\%}\\			
	DOTS3&Plurality&\aboveUA{+2\%}&\aboveUA{+14\%}&\aboveUA{+2\%}&\best{-2\%}&\inrange{-1\%}\\			
	PUZZ5&Plurality&\aboveUA{+1\%}&\aboveUA{+12\%}&\aboveUA{+1\%}&\best{-3\%}&\inrange{-2\%}\\			
		\hline		
	SYN.N&Copeland&{-12\%}&\best{-37\%}&{-25\%}&{-27\%}&{-29\%}\\			
	BUILD&Copeland&{-12\%}&{-7\%}&{-17\%}&\best{-27\%}&\inrange{-27\%}\\			
	DOTS3&Copeland&\aboveUA{+0\%}&\aboveUA{+0\%}&\best{-1\%}&\inrange{-0\%}&\inrange{-0\%}\\			
	PUZZ5&Copeland&{-7\%}&{-12\%}&{-16\%}&\best{-19\%}&\inrange{-18\%}\\			
		\hline	\hline	
	Complex&&{}&{}&{}&{}&{}\\	\hline
TRANSL&best v.&{-2\%}&{-3\%}&{-3\%}&\best{-4\%}&\best{-4\%}\\	
ETCH&best v.&{-28\%}&{-39\%}&{-39\%}&\best{-43\%}&\inrange{-42\%}\\	
ETCH&bit. maj.&{-1\%}&{-2\%}&\best{-4\%}&{-2\%}&{-2\%}\\	

\hline
    \end{tabular}
    \end{scriptsize}\caption{Results (RI) for rankings datasets, under three different voting rules ($n=10$, four ranked alternatives), and on the other complex annotation datasets.}    \label{tab:RANK}
    \end{table}
      \begin{figure}[t]
     \includegraphics[width=\linewidth,right]{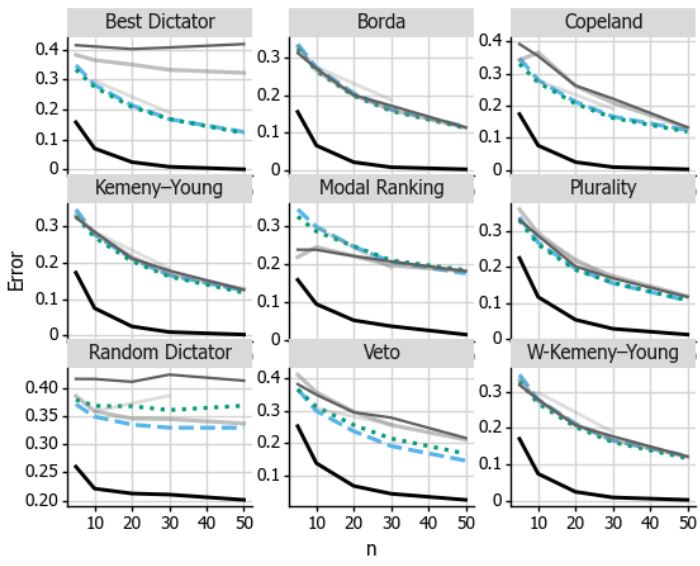}
     \includegraphics[width=0.95\linewidth,right]{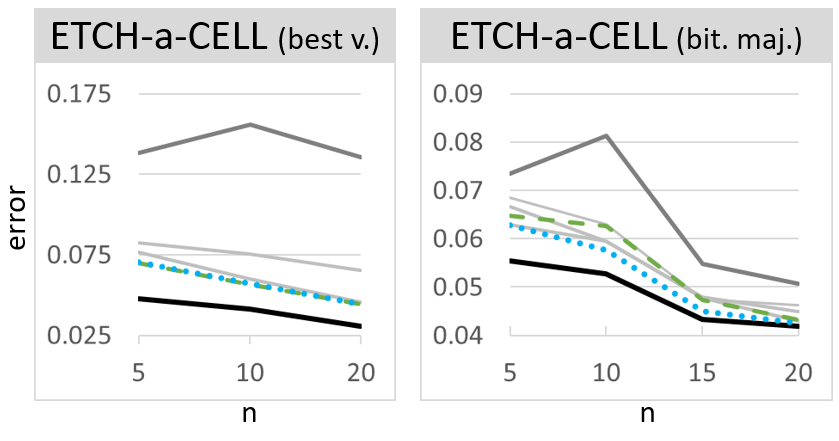}
    \caption{
    Top: error on  DOTS3 under nine different voting rules.  Bottom: error on Etch-a-Cell as number of workers grows.}
    \label{fig:RANK}
\end{figure}

\newpar{Evaluation}
The \emph{error} of every algorithm is the distance (as specified above) to the ground truth, averaged over all samples of certain size of a particular dataset. 

In the tables, we compute for each algorithm its \emph{Relative Improvement} $RI(Alg):=\frac{Err(Alg)-Err(\UA)}{Err(Alg)+Err(\UA)}$, where \UA serves as a baseline. Thus RI is in the range $[-1,1]$ where negative numbers mean improvement over $\UA$.

In some cases we see that one algorithm has slightly higher average error (on the graphs) but lower RI, or that the gap in RI is more substantial. This is since the graphs average over instances of varying difficulty, so instances with high baseline error have more effect.

\newpar{Results}
Fig.~\ref{fig:CAT_CONT} and Table~\ref{tab:CAT_CONT} (and more in \full{Appendix~\ref{apx:empirical_more}}{the full version}) show results on categorical and real-valued data, where there are many specialized algorithms. We can see that there is no single `state-of-the-art', as algorithms that do well on some datasets may have poor performance   on other data, or for a different number of voters/questions. 

Yet for moderate $n$ and $m$, all three versions of our \PTD consistently provide good results over almost all datasets, usually beating the three general-domain algorithms, 
and doing roughly at par with the best specialized ones.  We can also see that on synthetic real-valued data with Gaussian noise, the provably-optimal \PTDD is also best in practice.  

On real datasets, our \IPTD is usually better, and as $n$ and $m$ increase sometimes one of the specialized algorithms takes over.  
Intuitively, real datasets may often have a some correlation in poor workers' errors. Iterative algorithms, such as our \IPTD and some of the existing algorithms, are able to overcome this since they gradually rely more on the largest and most consistent set of workers.

The real strength of our approach shows when labels are more complex. Table~\ref{tab:RANK} and Fig.~\ref{fig:RANK} show how our simple algorithms are consistently better than the other three algorithms both on ranking data and on both complex annotation tasks.
Results also show that \PTD yields substantial improvement regardless of the voting rule in use. 
Moreover, while the other algorithms work better on some datasets, they are highly unstable and often perform worse than the baseline.

\newsec{Conclusion}
Average proximity can be used as a general scheme to estimate workers' competence  in a broad range of truth discovery and crowdsourcing scenarios.  Due to the  ``Anna Karenina principle,'' we expect the answers of  competent workers to be much closer to others, than those of incompetent workers, even under very weak assumptions on the domain and the noise model.   Under more explicit assumptions, the average distance accurately  estimates the true competence.  

The above results suggest an extremely simple, general and practical algorithm for truth-discovery (the \PTD algorithm), that weighs workers by their average proximity to others, and can be combined with most  aggregation methods. This is particularly useful in the context of existing crowdsourcing systems where the aggregation rule may be subject to constraints due to legacy, simplicity, explainability, legal, or other considerations (e.g. a voting rule with certain axiomatic properties). In addition, average proximity is simple and flexible enough so we can modify it  to deal with challenges outside the scope of the current paper, such as partial data~\cite{dalvi2013aggregating,karger2011iterative,li2014confidence}; 
semi-supervised learning~\cite{yin2011semi}; or worker's competence that varies across task types~\cite{braylan2020modeling}.

Despite its simplicity, the \PTD algorithm substantially improves the outcome compared to unweighted aggregation. It is also competitive with other, more sophisticated algorithms, especially in the common case of moderate input size. We thus conclude that the average similarity heuristic is indeed a frustratingly easy---and practical---tool for crowdsourcing.



An obvious shortcoming of  \PTD  is  that a group of workers that submit similar labels (e.g. by acting strategically) can boost their own weights. Future work will consider how to identify and/or mitigate the affect of such groups. 

\rmr{CACM papers on crowdsourcing / truth discovery:
https://cacm.acm.org/magazines/2009/12/52830-crowdsourcing-and-the-question-of-expertise/fulltext  (argues there is no ground truth)

https://cacm.acm.org/magazines/2011/2/104390-following-the-crowd/fulltext
}
\section*{Acknowledgements}
This research was supported by the Israel Science Foundation (ISF; Grant No. 2539/20).

Previous versions of this paper have been rejected from 7 (seven!) AI conferences. The current version is much improved thanks to the comments, suggestions, and references provided by the (many) reviewers along the way. 
\FloatBarrier
\begin{small}
 \bibliography{proxy.bib}

\begin{thebibliography}{44}
\providecommand{\natexlab}[1]{#1}

\bibitem[{Aitkin(1935)}]{aitkin1935least}
Aitkin, A. 1935.
\newblock On least squares and linear combination of observations.
\newblock \emph{Proceedings of the RSE}, 55: 42--48.

\bibitem[{Aydin et~al.(2014)Aydin, Yilmaz, Li, Li, Gao, and
  Demirbas}]{aydin2014crowdsourcing}
Aydin, B.~I.; Yilmaz, Y.~S.; Li, Y.; Li, Q.; Gao, J.; and Demirbas, M. 2014.
\newblock Crowdsourcing for multiple-choice question answering.
\newblock In \emph{Twenty-Sixth IAAI Conference}.

\bibitem[{Ben-Yashar and Paroush(2001)}]{ben2001optimal}
Ben-Yashar, R.; and Paroush, J. 2001.
\newblock Optimal decision rules for fixed-size committees in polychotomous
  choice situations.
\newblock \emph{Social Choice and Welfare}, 18(4): 737--746.

\bibitem[{Braylan and Lease(2020)}]{braylan2020modeling}
Braylan, A.; and Lease, M. 2020.
\newblock Modeling and Aggregation of Complex Annotations via Annotation
  Distances.
\newblock In \emph{Proceedings of The Web Conference 2020}, 1807--1818.

\bibitem[{Caragiannis, Procaccia, and Shah(2013)}]{caragiannis2013noisy}
Caragiannis, I.; Procaccia, A.~D.; and Shah, N. 2013.
\newblock When do noisy votes reveal the truth?
\newblock In \emph{EC'13}, 143--160. ACM.

\bibitem[{Carter, Morris, and Blashfield(1989)}]{carter1989partitioning}
Carter, R.~L.; Morris, R.; and Blashfield, R.~K. 1989.
\newblock On the partitioning of squared Euclidean distance and its
  applications in cluster analysis.
\newblock \emph{Psychometrika}, 54(1): 9--23.

\bibitem[{Condorcet(1785)}]{condorcet1785essai}
Condorcet, M.~J. 1785.
\newblock Essai sur l'application de l'analyse {\`a} la probabilit{\'e} des
  d{\'e}cisions rendues {\`a} la pluralit{\'e} des voix.
\newblock Trans. lain McLean and Fiona Hewitt. Paris.

\bibitem[{Dalvi et~al.(2013)Dalvi, Dasgupta, Kumar, and
  Rastogi}]{dalvi2013aggregating}
Dalvi, N.; Dasgupta, A.; Kumar, R.; and Rastogi, V. 2013.
\newblock Aggregating crowdsourced binary ratings.
\newblock In \emph{WWW'13}, 285--294.

\bibitem[{Dawid and Skene(1979)}]{dawid1979maximum}
Dawid, A.~P.; and Skene, A.~M. 1979.
\newblock Maximum likelihood estimation of observer error-rates using the EM
  algorithm.
\newblock \emph{Journal of the Royal Statistical Society: Series C (Applied
  Statistics)}, 28(1): 20--28.

\bibitem[{Deng et~al.(2014)Deng, Russakovsky, Krause, Bernstein, Berg, and
  Fei-Fei}]{deng2014scalable}
Deng, J.; Russakovsky, O.; Krause, J.; Bernstein, M.~S.; Berg, A.; and Fei-Fei,
  L. 2014.
\newblock Scalable multi-label annotation.
\newblock In \emph{SIGCHI'14}, 3099--3102. ACM.

\bibitem[{Diebold(1998)}]{diebold1998elements}
Diebold, F.~X. 1998.
\newblock \emph{Elements of forecasting}.
\newblock South-Western College Pub.

\bibitem[{Gadiraju et~al.(2017)Gadiraju, Fetahu, Kawase, Siehndel, and
  Dietze}]{gadiraju2017using}
Gadiraju, U.; Fetahu, B.; Kawase, R.; Siehndel, P.; and Dietze, S. 2017.
\newblock Using worker self-assessments for competence-based pre-selection in
  crowdsourcing microtasks.
\newblock \emph{TOCHI}, 24(4): 1--26.

\bibitem[{Gao and Zhou(2013)}]{gao2013minimax}
Gao, C.; and Zhou, D. 2013.
\newblock Minimax optimal convergence rates for estimating ground truth from
  crowdsourced labels.
\newblock \emph{arXiv preprint arXiv:1310.5764}.

\bibitem[{Grofman, Owen, and Feld(1983)}]{grofman1983thirteen}
Grofman, B.; Owen, G.; and Feld, S.~L. 1983.
\newblock Thirteen theorems in search of the truth.
\newblock \emph{Theory and Decision}, 15(3): 261--278.

\bibitem[{Hart et~al.(2018)Hart, Dillon, Marantan, Cardenas, Spelke, and
  Mahadevan}]{hart2018statistical}
Hart, Y.; Dillon, M.~R.; Marantan, A.; Cardenas, A.~L.; Spelke, E.; and
  Mahadevan, L. 2018.
\newblock The statistical shape of geometric reasoning.
\newblock \emph{Scientific reports}, 8(1): 12906.

\bibitem[{Huang and Fu(2013)}]{huang2013enhancing}
Huang, S.-W.; and Fu, W.-T. 2013.
\newblock Enhancing reliability using peer consistency evaluation in human
  computation.
\newblock In \emph{CSCW}, 639--648. ACM.

\bibitem[{Karger, Oh, and Shah(2011)}]{karger2011iterative}
Karger, D.~R.; Oh, S.; and Shah, D. 2011.
\newblock Iterative learning for reliable crowdsourcing systems.
\newblock In \emph{NeurIPS'11}, 1953--1961.

\bibitem[{Kawase, Kuroki, and Miyauchi(2019)}]{kawase2019graph}
Kawase, Y.; Kuroki, Y.; and Miyauchi, A. 2019.
\newblock Graph mining meets crowdsourcing: Extracting experts for answer
  aggregation.
\newblock In \emph{IJCAI'19}.

\bibitem[{Kobayashi(2018)}]{kobayashi2018frustratingly}
Kobayashi, H. 2018.
\newblock Frustratingly easy model ensemble for abstractive summarization.
\newblock In \emph{EMNLP'18}, 4165--4176.

\bibitem[{Kurvers et~al.(2019)Kurvers, Herzog, Hertwig, Krause, Moussaid,
  Argenziano, Zalaudek, Carney, and Wolf}]{kurvers2019detect}
Kurvers, R.~H.; Herzog, S.~M.; Hertwig, R.; Krause, J.; Moussaid, M.;
  Argenziano, G.; Zalaudek, I.; Carney, P.~A.; and Wolf, M. 2019.
\newblock How to detect high-performing individuals and groups: Decision
  similarity predicts accuracy.
\newblock \emph{Science advances}, 5(11): eaaw9011.

\bibitem[{Li, Baba, and Kashima(2018)}]{li2018incorporating}
Li, J.; Baba, Y.; and Kashima, H. 2018.
\newblock Incorporating worker similarity for label aggregation in
  crowdsourcing.
\newblock In \emph{International Conference on Artificial Neural Networks},
  596--606. Springer.

\bibitem[{Li et~al.(2014{\natexlab{a}})Li, Li, Gao, Su, Zhao, Demirbas, Fan,
  and Han}]{li2014confidence}
Li, Q.; Li, Y.; Gao, J.; Su, L.; Zhao, B.; Demirbas, M.; Fan, W.; and Han, J.
  2014{\natexlab{a}}.
\newblock A confidence-aware approach for truth discovery on long-tail data.
\newblock \emph{Proceedings of the VLDB Endowment}, 8(4): 425--436.

\bibitem[{Li et~al.(2014{\natexlab{b}})Li, Li, Gao, Zhao, Fan, and
  Han}]{li2014resolving}
Li, Q.; Li, Y.; Gao, J.; Zhao, B.; Fan, W.; and Han, J. 2014{\natexlab{b}}.
\newblock Resolving conflicts in heterogeneous data by truth discovery and
  source reliability estimation.
\newblock In \emph{SIGMOD'14}.

\bibitem[{Li et~al.(2012)Li, Dong, Lyons, Meng, and Srivastava}]{li2012truth}
Li, X.; Dong, X.~L.; Lyons, K.; Meng, W.; and Srivastava, D. 2012.
\newblock Truth finding on the deep web: is the problem solved?
\newblock \emph{Proceedings of the VLDB Endowment}, 6(2): 97--108.

\bibitem[{Li et~al.(2016)Li, Gao, Meng, Li, Su, Zhao, Fan, and
  Han}]{li2016survey}
Li, Y.; Gao, J.; Meng, C.; Li, Q.; Su, L.; Zhao, B.; Fan, W.; and Han, J. 2016.
\newblock A survey on truth discovery.
\newblock \emph{ACM SIGKDD Explorations Newsletter}, 17(2): 1--16.

\bibitem[{Mandal, Radanovic, and Parkes(2020)}]{mandal2020}
Mandal, D.; Radanovic, G.; and Parkes, D.~C. 2020.
\newblock The Effectiveness of Peer Prediction in Long-Term Forecasting.
\newblock In \emph{AAAI'20}.

\bibitem[{Mao, Procaccia, and Chen(2013)}]{mao2013better}
Mao, A.; Procaccia, A.~D.; and Chen, Y. 2013.
\newblock Better human computation through principled voting.
\newblock In \emph{AAAI'13}.

\bibitem[{McLaughlin(2014)}]{NASA}
McLaughlin, E. 2014.
\newblock Image Overload: Help us sort it all out, NASA requests.
\newblock CNN.com. Retrieved at 18/9/2014.

\bibitem[{Parisi et~al.(2014)Parisi, Strino, Nadler, and
  Kluger}]{parisi2014ranking}
Parisi, F.; Strino, F.; Nadler, B.; and Kluger, Y. 2014.
\newblock Ranking and combining multiple predictors without labeled data.
\newblock \emph{PNAS}, 111(4): 1253--1258.

\bibitem[{Prelec, Seung, and McCoy(2017)}]{prelec2017solution}
Prelec, D.; Seung, H.~S.; and McCoy, J. 2017.
\newblock A solution to the single-question crowd wisdom problem.
\newblock \emph{Nature}, 541(7638): 532.

\bibitem[{Shah and Zhou(2015)}]{shah2015double}
Shah, N.~B.; and Zhou, D. 2015.
\newblock Double or nothing: Multiplicative incentive mechanisms for
  crowdsourcing.
\newblock In \emph{NeurIPS'15}, 1--9.

\bibitem[{Shraga, Gal, and Roitman(2018)}]{shraga2018type}
Shraga, R.; Gal, A.; and Roitman, H. 2018.
\newblock What Type of a Matcher Are You? Coordination of Human and Algorithmic
  Matchers.
\newblock In \emph{Proceedings of the HILL'19 Workshop}, 1--7.

\bibitem[{Snow et~al.(2008)Snow, O’connor, Jurafsky, and Ng}]{snow2008cheap}
Snow, R.; O’connor, B.; Jurafsky, D.; and Ng, A.~Y. 2008.
\newblock Cheap and fast--but is it good? evaluating non-expert annotations for
  natural language tasks.
\newblock In \emph{Proceedings of the 2008 conference on empirical methods in
  natural language processing}, 254--263.

\bibitem[{Spiers et~al.(2021)Spiers, Songhurst, Nightingale, De~Folter,
  Community, Hutchings, Peddie, Weston, Strange, Hindmarsh
  et~al.}]{spiers2021deep}
Spiers, H.; Songhurst, H.; Nightingale, L.; De~Folter, J.; Community, Z.~V.;
  Hutchings, R.; Peddie, C.~J.; Weston, A.; Strange, A.; Hindmarsh, S.; et~al.
  2021.
\newblock Deep learning for automatic segmentation of the nuclear envelope in
  electron microscopy data, trained with volunteer segmentations.
\newblock \emph{Traffic}, 22(7): 240--253.

\bibitem[{Von~Ahn and Dabbish(2008)}]{von2008designing}
Von~Ahn, L.; and Dabbish, L. 2008.
\newblock Designing games with a purpose.
\newblock \emph{Communications of the ACM}, 51(8): 58--67.

\bibitem[{Vuurens, de~Vries, and Eickhoff(2011)}]{vuurens2011much}
Vuurens, J.; de~Vries, A.~P.; and Eickhoff, C. 2011.
\newblock How much spam can you take? an analysis of crowdsourcing results to
  increase accuracy.
\newblock In \emph{ACM SIGIR Workshop on CIR' 11}, 21--26.

\bibitem[{Wais et~al.(2010)Wais, Lingamneni, Cook, Fennell, Goldenberg,
  Lubarov, Marin, and Simons}]{wais2010towards}
Wais, P.; Lingamneni, S.; Cook, D.; Fennell, J.; Goldenberg, B.; Lubarov, D.;
  Marin, D.; and Simons, H. 2010.
\newblock Towards building a high-quality workforce with mechanical turk.
\newblock \emph{NeurIPS workshop}, 1--5.

\bibitem[{Wan et~al.(2016)Wan, Chen, Kaplan, Han, Gao, and Zhao}]{wan2016truth}
Wan, M.; Chen, X.; Kaplan, L.; Han, J.; Gao, J.; and Zhao, B. 2016.
\newblock From truth discovery to trustworthy opinion discovery: An
  uncertainty-aware quantitative modeling approach.
\newblock In \emph{SIGKDD'16}, 1885--1894.

\bibitem[{Witkowski et~al.(2013)Witkowski, Bachrach, Key, and
  Parkes}]{witkowski2013dwelling}
Witkowski, J.; Bachrach, Y.; Key, P.; and Parkes, D. 2013.
\newblock Dwelling on the negative: Incentivizing effort in peer prediction.
\newblock In \emph{Proceedings of the AAAI Conference on Human Computation and
  Crowdsourcing}, volume~1.

\bibitem[{Wu et~al.(2016)Wu, Schuster, Chen, Le, Norouzi, Macherey, Krikun,
  Cao, Gao, Macherey et~al.}]{wu2016google}
Wu, Y.; Schuster, M.; Chen, Z.; Le, Q.~V.; Norouzi, M.; Macherey, W.; Krikun,
  M.; Cao, Y.; Gao, Q.; Macherey, K.; et~al. 2016.
\newblock Google's neural machine translation system: Bridging the gap between
  human and machine translation.
\newblock \emph{arXiv preprint arXiv:1609.08144}.

\bibitem[{Xiao et~al.(2016)Xiao, Gao, Wang, Wang, Su, and Liu}]{xiao2016truth}
Xiao, H.; Gao, J.; Wang, Z.; Wang, S.; Su, L.; and Liu, H. 2016.
\newblock A truth discovery approach with theoretical guarantee.
\newblock In \emph{SIGKDD'16}, 1925--1934.

\bibitem[{Yin and Tan(2011)}]{yin2011semi}
Yin, X.; and Tan, W. 2011.
\newblock Semi-supervised truth discovery.
\newblock In \emph{WWW'11}, 217--226.

\bibitem[{Zhang et~al.(2016)Zhang, Chen, Zhou, and Jordan}]{zhang2016spectral}
Zhang, Y.; Chen, X.; Zhou, D.; and Jordan, M.~I. 2016.
\newblock Spectral methods meet EM: A provably optimal algorithm for
  crowdsourcing.
\newblock \emph{JMLR}, 17(1): 3537--3580.

\bibitem[{Zhao and Han(2012)}]{zhao2012probabilistic}
Zhao, B.; and Han, J. 2012.
\newblock A probabilistic model for estimating real-valued truth from
  conflicting sources.
\newblock \emph{Proc. of QDB}, 1817.

\end{thebibliography}
 \end{small}
\full{
\clearpage
\appendix

\onecolumn

We denote by $\ind{C}$ the indicator variable of a true/false condition $C$. 

Throughout the appendix, the  \emph{disparity} of a worker refers to the her average distance to other workers, i.e. $\pi_i$.

 We denote by $\mu_\calT(\bm z):=E_{\bm s\sim \calY(\bm z,\calT)}[d(\bm s,\bm z)]$ the mean fault, omitting $\calT$ and/or $\bm z$ when clear from the context. 
 
\section{A Detailed Graphical Example}\label{apx:example}
Recall our running example from the main text, where we sampled the fault (i.e. the variance) of five workers from $U[50,200]$.

For each worker we independently sampled her answers as $\bm s_i = \bm z + \bm \eps_i$, where  according to the AWG model, each $\eps_{i,j}$ is sampled from a Normal distribution with mean $0$ and variance $t_i$.

The reports of the answers on the for real-valued questions are shown in Table~\ref{tab:s_example_apx} and also graphically in Figure~\ref{fig:s_example}.

For example, to calculate the error of worker~1, we average the squared distance over all four questions/dimensions:
$$d_E(\bm x_1,\bm z) = \frac14((81-80)^2 + (6-0)^2 + (41-40)^2 +(-14 -(-10))^2) = \frac14(1+36+1+16) = 13.5.$$

We can see that workers with lower variance (smaller circles) tend to be closer to the truth but are not necessarily closer. 
The last line shows the Oracle Aggregation, which is the weighted mean with weights inversely proportional to $t_i$. We see that it is indeed closer to the ground truth in this example.  
    

\begin{table}
 \begin{scriptsize}
 \begin{center}
  $ \begin{array}{c|c|cccc|c}
  i ~\backslash~ j&  & 1 & 2 & 3& 4& d(\bm s,\bm z)\\ \hline
  & t_i~\backslash~ z_j & 80 & 0 & 40 & -10 & \\
  \hline
1	&	55	&	81	&	6	&	41	&	-14	&	13.5	\\
2	&	80	&	89	&	-6	&	35	&	4	&	84.5	\\
3	&	100	&	105	&	-18	&	39	&	-5	&	243.7	\\
4	&	120	&	68	&	9	&	62	&	-10	&	177.2	\\
5	&	165	&	67	&	20	&	58	&	-20	&	248.2	\\
			\hline
UA(S)&		&	82	&	2.2	&	47	&	-9	&	14.7	\\
OA(S)&&83.6&0.9&44.3&-8.3& 8.9\\
    \end{array}$
    \end{center}
    \end{scriptsize}\vspace{-0mm}
    
    \caption{\label{tab:s_example_apx}An example of a dataset  sampled from the AWG model. The  bottom row is showing aggregated results using the unweighted mean. The rightmost column shows the \emph{error}, i.e. the distance of every row from the ground truth using NSED. This is the same table as Table~\ref{tab:s_example} in the main text. We also added a row for the \emph{Oracle Aggregation}.  
    }
  \end{table}  
  \begin{figure}[t]
  \centering
 \includegraphics[width=11.1cm]{NeurIPS'22 figures/s_example.PNG}
  \caption{\label{fig:s_example} A graphical representation of Table~\ref{tab:s_example_apx} (for obvious reasons we use two 2-dimensional plots instead of a 4-dimensional one). The blue X marks the ground truth. Workers' reports are marked by gray circles, whose size is proportional to $t_i$ (so smaller circles tend to be closer to the truth). The mean $\UA(S)$ is marked by a green diamond.}
    \vspace{-3mm}
    \end{figure}

\section{General Anna Karenina Theorem}\label{apx:proofs}
Consider an inner product space with a symmetric inner product $\tup{,}$. As usual, the norm $\|\bm x\|$ is a shorthand for $\sqrt{\tup{\bm x,\bm x}}$.

\begin{rtheorem}{thm:AK_general}
$$E_{S\sim \calY(\bm z,\calT)^n}[\pi_i(S)|t_i,\bm z] = f_i(\bm z) + \mu_{\calT}(\bm z)-2\tup{\bias_i(\bm z),\bias_{\calT}(\bm z)}.$$
\end{rtheorem}
\begin{proof}
\begin{align*}
    E_{S\sim \calY(\bm z,\calT)^n}&[\pi_i(S)|t_i,\bm z] = 
    E_{S\sim \calY(\bm z,\calT)^n}[\frac{1}{n-1}\sum_{i'\neq i}\|\bm  s_i - \bm s_{i'}\|^2|t_i,\bm z]\\ 
    &= E_{S\sim \calY(\bm z,\calT)^n}[\frac{1}{n-1}\sum_{i'\neq i}\|\bm  s_i -\bm z - ( \bm s_{i'} - \bm z)\|^2|t_i,\bm z] \\
    &= E_{S\sim \calY(\bm z,\calT)^n}[\frac{1}{n-1}\sum_{i'\neq i}\|\bm  s_i -\bm z\|^2 + \|\bm s_{i'} - \bm z\|^2  - 2\langle\,\bm  s_i -\bm z,\bm s_{i'} - \bm z\rangle|t_i,\bm z] \\ 
    &= E_{\bm s_i \sim \calY(\bm z, t_i)}[\frac{1}{n-1}\sum_{i'\neq i}\|\bm  s_i -\bm z\|^2|t_i,\bm z] + E_{S\sim \calY(\bm z,\calT)^{n-1}}[\frac{1}{n-1}\sum_{i'\neq i}\|\bm  s_{i'} -\bm z\|^2|t_i,\bm z]  \\
    &\qquad \qquad - 2E_{S\sim \calY(\bm z,\calT)^n}[\frac{1}{n-1}\sum_{i'\neq i}\langle\,\bm  s_i -\bm z,\bm s_{i'} - \bm z\rangle|t_i,\bm z] \\
    &= f_i(\bm z) + \frac{1}{n-1}E_{t_{i'} \sim \tau}[\sum_{i'\neq i}E_{s_{i'} \sim \calY(\bm z, t_{i'})}[\|\bm  s_{i'} -\bm z\|^2|t_{i'},\bm z]| \bm z] \\
    &\qquad \qquad - 2\frac{1}{n-1}E_{t_{i} \sim \tau}[\sum_{i'\neq i}E_{{\bm s_i \sim \calY(\bm z, t_i)}, {s_{i'} \sim \calY(\bm z, t_{i'})}}[\langle\,\bm  s_i -\bm z,\bm s_{i'} - \bm z\rangle|t_{i'}, t_i,\bm z]|t_i,\bm z]\\
    &= f_i(\bm z) + \frac{1}{n-1}\sum_{i'\neq i}E_{S\sim \calY(\bm z,\calT)}[f_{i'}| \bm z] \\
    &\qquad \qquad -2\frac{1}{n-1}\sum_{i'\neq i}E_{t_{i} \sim \tau}[\langle\,E_{\bm s_i \sim \calY(\bm z, t_i)}[\bm  s_i -\bm z|t_i,\bm z],E_{\bm s_{i'} \sim \calY(\bm z, t_{i'})}[\bm s_{i'} - \bm z|t_{i'},\bm z]\rangle|t_i,\bm z]\\
    &=f_i(\bm z) + \mu_\calT(\bm z) \\
    &\qquad \qquad - 2\frac{1}{n-1}\sum_{i'\neq i}\langle\,E_{t_{i} \sim \tau}[E_{\bm s_i \sim \calY(\bm z, t_i)}[\bm  s_i -\bm z|t_i,\bm z]|t_i,\bm z],E_{t_{i} \sim \tau}[E_{\bm s_{i'} \sim \calY(\bm z, t_{i'})}[\bm s_{i'} - \bm z|t_{i'},\bm z]|t_i,\bm z]\rangle\\
    &=f_i(\bm z) + \mu_\calT(\bm z) \\
    &\qquad \qquad - 2\frac{1}{n-1}\sum_{i'\neq i}\langle\,E_{\bm s_i \sim \calY(\bm z, t_i)}[\bm  s_i -\bm z|t_i,\bm z],E_{S\sim \calY(\bm z,\calT)}[\bias_{i'}(\bm z)|\bm z]\rangle\\
    &= f_i(\bm z) + \mu_\calT(\bm z) -2\tup{\bias_i(\bm z),\bias_{\calT}(\bm z)}
\end{align*}
\end{proof}

\paragraph{Model-free bounds}

In the following lemmas, all parameters ($\bm b_i, f_i, \mu$) may depend on $\bm z$ as in Theorem~\ref{thm:AK_general}. We omit the $\bm z$ argument for easier reading.

\begin{lemma}\label{lemma:CS}For all $i$,
$|\tup{ \bias_i, \bias_{\calT}}|\leq \sqrt{f_i}\sqrt{\mu}$.
\end{lemma}
\begin{proof}Due to Cauchy-Schwarz inequality,
\begin{align*}
    |\tup{ \bias_i, \bias_{\calT}}|&\leq  \|\bias_i\|\cdot\| \bias_{\calT}\|=   \sqrt{\frac1m\sum_{j=1}^m \b_{ij}}\cdot\sqrt{\frac1m\sum_{j=1}^m \b_{\calT,j}}
    =\sqrt{\frac1m\sum_{j=1}^m (\b_{ij})^2}\cdot\sqrt{\frac1m\sum_{j=1}^m (\b_{\calT,j})^2}\\
        &=\sqrt{\frac1m\sum_{j=1}^m (E_{\bm s_{i}\sim \calY_i}[ s_{ij}-z_j])^2}\cdot\sqrt{\frac1m\sum_{j=1}^m (E_{t\sim \calT}E_{\bm s\sim \calY(t)}[s_{j}-z_j])^2}
\end{align*}
By Jensen inequality and convexity of the square function,
\begin{align*}
    &\leq \sqrt{\frac1m\sum_{j=1}^m E_{\bm s_{i}\sim \calY_i}[ (s_{ij}-z_j)^2]}\cdot\sqrt{\frac1m\sum_{j=1}^m E_{t\sim \calT}E_{\bm s\sim \calY(t)}[(s_{j}-z_j)^2]}\\
    &\leq \sqrt{E_{\bm s_{i}\sim \calY_i}[ \frac1m\sum_{j=1}^m(s_{ij}-z_j)^2]}\cdot\sqrt{ E_{t\sim \calT}E_{\bm s\sim \calY(t)}[\frac1m\sum_{j=1}^m(s_{j}-z_j)^2]}\\
    &\leq \sqrt{E_{\bm s_{i}\sim \calY_i}[ d(\bm s_{i},\bm z)]}\cdot\sqrt{ E_{t\sim \calT}E_{\bm s\sim \calY(t)}[d(\bm s,\bm z)]}
    \leq \sqrt{f_i}\cdot\sqrt{ \mu},\\
\end{align*}
as required.
\end{proof}

The following lemma is another corollary of the Anna Karenina principle. 

\begin{rlemma}{lemma:pi_LB_UB}
For any worker $i$ and any $\alpha\geq 0$, if $f_i=\alpha \mu$, then $E[\pi_i]\in (1\pm\sqrt\alpha)^2 \mu$.
\end{rlemma}
\begin{proof}
For the upper bound, 
\begin{align*}
    E[\pi_{i}] &= f_{i} + \mu -2\tup{ \bias_{i}, \bias_{\calT}}\tag{By Thm.~\ref{thm:AK_general}}\\
    &\leq f_{i} + \mu + 2\sqrt{ f_{i}}\sqrt{\mu}\tag{By Lemma~\ref{lemma:CS}}\\
    &\leq \alpha\mu + \mu + 2\sqrt{\alpha\mu} \sqrt\mu = (1+\alpha+2\sqrt\alpha)\mu = (1+\sqrt\alpha)^2\mu. \tag{since $f_i\leq \alpha\mu$}
\end{align*}
The proof of the lower bound is symmetric. 
\end{proof}

\begin{rcorollary}{cor:separate_GB}
Consider a ``poor" worker $i^*$ with $f_{i^*}>9 \mu_\calT$, and a ``good" worker $i^{**}$ with $f_{i^{**}}<\mu_\calT$. Then $E[\pi_{i^{**}}]<E[\pi_{i^{*}}]$. No better separation is possible.
\end{rcorollary}
\begin{proof}
By Lemma.~\ref{lemma:pi_LB_UB}, $E[\pi_{i^{**}}]\leq 4\mu < E[\pi_{i^{*}}]$.

Without further assumptions, this condition is tight. To see why, consider a population on $\mathbb R$ where $z=0$.  The good worker~$i^{**}$ provides the fixed report $s_{i^{**}}=-1$, the poor worker $i^{*}$ provides the fixed report $s_{i^{*}}=3-\delta$. However the measure of types $t_{i^*},t_{i^{**}}$ in $\calT$ is 0, and w.p. 1 type $t'$ is selected with a fixed report $s=1$.
Note that $\mu=(s-z)=1$, and thus $f_{i^{**}}=1=\mu$ whereas $f_{i^*}=9=9\mu$.

However, the reports of $i^*, i^{**}$ are completely symmetric around $1$, in the absence of more workers there is no way to distinguish between these two workers, by their disparity or otherwise. 
\end{proof}

\begin{corollary}\label{cor:AK_HS}
Suppose there are two types of workers, with $f^P>f^H$.
 For any spammer $i^P$ and hammer $i^{H}$, if \textbf{any} of the following conditions apply,  then $E[\pi_{i^P}] > E[\pi_{i^{H}}]$:
\begin{enumerate}[label=(\emph{\alph*})]
\item  $(b^H)=0$  and $\alpha<\frac{f^P-f^H}{2 (b^P)^2}$;
\item $f^H=0$ and $\alpha<0.5$;
    \item  
    $f^P-f^H>(b^P)^2-(b^H)^2$;
    \item $b^P\leq b^H$.
\end{enumerate}
\end{corollary}Intuitively, this means that it is easy to distinguish hammers from spammers as there are fewer spammers; as the difference in competence becomes larger; and as the spammers are \emph{less} biased.
\begin{proof}
 Denote by $\beta:=\tup{\bm s^P,\bm s^H}$ the amount of agreement in the bias of hammers and spammers. 
 
For every spammer $i^P$
\begin{align}
    E[\pi_{i}|t_i=P]&=f_{i} + \mu - 2\tup{\bias_{i},\bias_{\calT}} \notag\\
    &= f^P + \alpha f^P - 2\tup{\bm s^P,\alpha\bm s^P + (1-\alpha)\bm s^H}\notag\\
    &= f^P + \alpha f^P - \alpha2\tup{\bm s^P,\bm s^P} - (1-\alpha)2\tup{\bm s^P,\bm s^H}\notag\\
    &= f^P + \alpha f^P - 2\alpha (b^P)^2 - (1-\alpha)2\tup{\bm s^P,\bm s^H}\notag\\
    &= f^P + \alpha f^P - 2\alpha (b^P)^2 - 2(1-\alpha)\beta.\label{eq:pi_P}
\end{align}
Similarly, it applies for every hammer $i^H$:
\begin{align}
     E[\pi_{i}|t_i=H]&=f_{i} + \mu - 2\tup{\bias_{i},\bias_{\calT}}\notag\\
    &= f^H + \alpha f^P - 2\tup{\bm s^H,\alpha\bm s^P + (1-\alpha)\bm s^H}\notag\\
    &= f^H + \alpha f^P - 2\alpha\beta - 2(1-\alpha)(b^H)^2.\label{eq:pi_H}
\end{align}

\medskip

(a) Follows immediately from Eqs.~\eqref{eq:pi_P},\eqref{eq:pi_H}, since $\beta=0$.

\medskip
(b) Suppose $f^H=0$, then  Condition~(a) is implied, since  $\frac{f^P-f^H}{2p^2}=\frac{f^P}{2p^2}\geq\frac{(b^P)^2}{2 (b^P)^2}=\frac12$.

\medskip
(c) We note that due to Cauchy-Schwartz inequality,
\labeq{beta_bound}
{|\beta| = |\tup{\bm s^P,\bm s^H}|\leq\|\bm s^P\|\cdot\|\bm s^H\| = (b^P)\cdot (b^H)= \sqrt{(b^P)^2 (b^H)^2}\leq\frac{(b^P)^2+(b^H)^2}{2}. }
Due to Eqs.~\eqref{eq:pi_P},\eqref{eq:pi_H}:
\begin{align*}
    E[\pi_{i^P}] - E[\pi_{i^{H}}] & = f^P + \alpha f^P - 2\alpha p^2 - 2(1-\alpha)\beta - (f^H + \alpha f^P - 2\alpha\beta - 2(1-\alpha)(b^H)^2) \\
    &= f^P-f^H -(2\beta(1-2\alpha) +2\alpha ((b^P)^2+(b^H)^2) - 2h^2)\\
    &\geq f^P-f^H -(((b^P)^2+(b^H)^2)(1-2\alpha) +2\alpha ((b^P)^2+(b^H)^2) - 2h^2)\tag{By Eq.~\eqref{eq:beta_bound}}\\
        &\geq f^P-f^H -(((b^P)^2+(b^H)^2)- 2 (b^H)^2) = f^P-f^H -((b^P)^2-(b^H)^2) >0,\\
\end{align*}
as required.

\medskip
(d) If $(b^P)\leq (b^H)$ then $f^P-f^H>0\geq (b^P)^2-(b^H)^2$ thus Condition~(c) is implied.
\end{proof}

\newsec{Categorical and Binary labels}\label{apx:binary}
Binary labels are perhaps the most common domain where truth discovery is applied.  
In this domain both the ground truth $\bm z$ and workers' answers $\bm s_1,\ldots,\bm s_n$ are given as vectors in $\{0,1\}^m$. 

In order to apply the Anna Karenina principle, we need to show all conditions apply. 


The type of a worker is completely defined by the probability to make an error. Formally, $t_i=(f^*_{ij})_{j\leq m}$, where each $f^*_{ij}$ is a function mapping $z_j\in\{0,1\}$ to an error probability in $[0,1]$, and $Pr_{\bm s_{i}\sim \calY(\bm z,t_i)}[s_{ij}\neq z_j]:= f^*_{ij}(z_j)$.

A special case is the \emph{Independent One-Coin} model\footnote{The One-Coin model is also known as the Dawid-Skene model~\cite{dawid1979maximum}. The `Independent' part refers to the first sampling step where workers' types are sampled i.i.d. from the proto-population.} where $f^*_{ij}(z_j=1)=f^*_{ij}(z_j=0)$ and is the same for all questions $j$ (but possibly varies between workers). 

More general noise models may assign asymmetric error probabilities, with distinct sensitivity and specificity. The result may not hold in this case.

\begin{corollary}[Anna Karenina principle for the binary One-Coin model]\label{cor:AK_bin}
For the binary domain with the One-Coin model, $$E[\pi_i(S_i)|t_i,\bm z] = \mu_\calT(\bm z) + (1-2\mu_\calT(\bm z))f_i(\bm z).$$
\end{corollary}
\begin{proof}
Since $\calX=\{0,1\}^m$, the square Euclidean distance coincides with the normalized Hamming distance, as for any $\bm x,\bm y\in \{0,1\}^m$
$$d_E(\bm x,\bm y)=\frac1m\sum_{j\leq m}(x_j-y_j)^2 =\frac1m|\{j:x_j\neq y_j\}|.$$

For any worker $i$, we have that 
$$f_i(\bm z) = E[d(\bm s_i,\bm z)]=\sum_{j=1}^mPr[s_{ij}\neq z_j]=\frac1m\sum_{j=1}^m f^*_{ij} =\frac1m\sum_{j=1}^m f_{i}^* = f^*_i,$$
and
$$\b_{ij}(\bm z)=E_{\bm s_i\sim \calY(\bm z,t_i)}[s_{ij}-z_j]=\left\{\begin{array}{lll}
Pr_{\bm s_i\sim \calY(\bm z,t_i)}[\bm s_{ij}=0]\cdot 1 &= f^*_{ij} = f^*_i, &\text{ if } z_j=1\\
Pr_{\bm s_i\sim \calY(\bm z,t_i)}[\bm s_{ij}=1]\cdot (-1) &= -f^*_{ij} = -f^*_i, &\text{ if } z_j=0\\
\end{array}
\right.
.$$
In particular, both of $f_i$ and $\tup{\bias_i,\bias_{i'}}$ do not depend on $\bm z$ so we can omit the argument $\bm z$ from the equation.

Also note that $\b_{\calT,j} = E_{t_i\sim \calT}[f_{i}]=\mu_\calT.$
Putting everything together, we get from Theorem~\ref{thm:AK} that
\begin{align*}
    E[\pi_i|t_i] &= f_i+\mu_\calT - \frac2m\tup{\bias_i,\bias_{\calT}} =  f_i+\mu_\calT - \frac2m\sum_{j=1}^m f_{ij} \b_{\calT,j} \\
    &=  f_i+\mu_\calT - \frac2m\sum_{j=1}^m f_{i} \mu
    =  f_i+\mu_\calT - 2f_i\mu_\calT = \mu_\calT + (1-2\mu_\calT)f_i
\end{align*}
\end{proof}

A word is in place regarding the ``bias'' of a worker. Indeed, in the binary domain, all workers that err on a question necessarily err in the same direction (as there is only one wrong answer). Thus the error probability $f_i\leq 0.5$ reflects both the ``noisyness" the bias of the worker. As $f_i$ grows beyond $0.5$, the noise is decreasing but the bias keeps increasing. If the average fault $\mu$ is above $0.5$ then the relation between fault level and disparity becomes negative.

\subsection{Comparison with Kurvers et al.~\shortcite{kurvers2019detect}}
Kurvers et al.~\shortcite{kurvers2019detect} adopt the One-Coin model, without the first sampling stage. That is, they start with a finite population of $N$ workers, each with a given probability $p_i$ to answer correctly (independently for each question). Clearly $p_i=1-f_i$ where $f_i$ is worker $i$'s fault in our model.

They then define the \emph{average decision similarity} $\ol \phi_i$ as the  average percentage agreement of  individual $i$ with all other  individuals. Again it is immediate to see that $\ol \phi_i=1-\pi_i$, as $\pi_i$ was defined in our model as the average \emph{disagreement}.

The main theoretical result in \cite{kurvers2019detect} is that
$$E[\ol \phi_i|p_i]=\frac{1}{N-1}\sum_{j\neq i}(p_i p_j+(1-p_i)(1-p_j)).$$

If we relax the assumption of a finite population by taking $N$ to infinity, we get for every worker $i$:
\labeq{kuv}
{E[\ol \phi_i | p_i]=p_i E[p_j] + (1-p_i)(1-E[p_j]),}
where the expectations on the right side are over the sampling of individuals from the population, and the expectation on the left is over both sampling of workers and answers.

Now it is straightforward to see that Eq.~\eqref{eq:kuv} follows immediately from (in fact equivalent to) our Cor.~\ref{cor:AK_bin}, 
as 
\begin{align*}
    E[\ol \phi_i|p_i] &= p_i E[p_j] + (1-p_i)(1-E[p_j])& \iff\\
1-E[\pi_i|f_i] &= (1-f_i)E[1-f_j]+f_i E[f_j] = (1-f_i)(1-\mu)+f_i\mu&\iff\\
1-E[\pi_i|f_i] &= 1-f_i - \mu +f_i \mu + f_i\mu = 1-(f_i+\mu-2f_i\mu) &\iff\\
E[\pi_i|f_i] &= f_i+\mu - 2f_i\mu=\mu + (1-2\mu)f_i.
\end{align*}

While technically the proof is rather simple under either approach, the advantage of our approach is by showing that there is nothing special about binary labels. In fact, the proof readily extends to multiple labels (as long as errors remain independent!), and the general Anna Karenina theorem shows us how the expression is modified in the presence of dependencies. 

\subsection{Multi-label categorical answers}
Our corollary for binary labels generalizes naturally for multiple labels. Note that in general, the noise model for $k$ labels is described by $k 
\times k$ \emph{confusion matrix}. For general confusion matrices, a weighted plurality may not provide the optimal aggregation even when the matrix of each worker is known~\cite{ben2001optimal}.
However a natural generalization of the  one-coin model is to assume that all $k-1$ wrong answers are equally likely. Under this assumption, we get:
\begin{corollary}[Anna Karenina principle for the multi-label One-Coin model]\label{cor:AK_cat}
For the categorical domain with the One-Coin model, $$E[\pi_i(S_i)|t_i,\bm z] = \mu_\calT(\bm z) + (1-\frac{k}{k-1}\mu_\calT(\bm z))f_i(\bm z).$$
\end{corollary}
The proof is similar to the binary case.

\section{Distance Metrics}\label{apx:metric}
Suppose that $d$ is an arbitrary distance metric over some space $Z$. $\bm z \in Z$ is the ground truth, and $\bm s_i\in Z$ is the report of worker~$i$. The fault level and disparity are defined as the expected distance to the ground  truth  and the actual average distance to the other workers, respectively (just as in the real-valued and binary domains).

\begin{definition}[Symmetric noise]
We say that the noise model $\calY$ is \emph{symmetric} if 
there is an invertible mapping $\phi:Z\rightarrow Z$, such that for all $x$, $d(\bm x,\phi(\bm x))=2d(\bm x,\bm z)$, and for all $t$ and $X\subseteq Z$, 
$Pr_{\bm s\sim \calY(\bm z,t)}[\bm s \in X] = Pr_{\bm s\sim \calY(\bm z,t)}[\bm s=\phi(X)]$. 

\end{definition}
In words, 
for every point $\bm x$ there is an equally-likely point $\phi(\bm x)$ that is on ``the other side" of $\bm z$.
The AWG model is symmetric by this definition  under any $L_p$ norm, by setting $\phi(\bm x) = 2\bm z-\bm x$. Note that this is a stronger symmetry  requirement than zero mean bias, and in particular it entails $\bias_i=0$ \emph{for every worker}. 

\begin{rtheorem}{thm:AK_metric}
If $d$ is any distance metric and $\calY$ is symmetric, then $\max\{\mu_\calT(\bm z),f_i(\bm z)\} \leq E[\pi_i(S)|t_i,\bm z] \leq \mu_\calT(\bm z) + f_i(\bm z)$.
\end{rtheorem}
\begin{proof}
We start with the upper bound, which does not require the symmetry assumption.
\begin{align*}
E[\pi_i(S)|t_i,\bm z] & = E[\frac{1}{n-1} \sum_{i'\neq i}d(\bm s_i,\bm s_{i'})|t_i,\bm z] \\
&\leq E[\frac{1}{n-1} \sum_{i'\neq i}d(\bm s_i,\bm z)+d(\bm z,\bm s_{i'})|t_i,\bm z] \tag{triangle inequality}\\ 
&=E[\frac{1}{n-1} \sum_{i'\neq i}d(\bm s_i,\bm z)|t_i,\bm z] +E[\frac{1}{n-1} \sum_{i'\neq i}d(\bm z,\bm s_{i'})|t_i,\bm z]\\
&=E[d(\bm s_i,\bm z)|t_i,\bm z] +\frac{1}{n-1} \sum_{i'\neq i} E[d(\bm z,\bm s_{i'})|\bm z]\\
&=f_i(\bm z) +\frac{1}{n-1} \sum_{i'\neq i} E_{t_{i'}\sim \calT}E_{\bm s_{i'}\sim \calY(t_{i'},\bm z)}[d(\bm z,\bm s_{i'}(|t_i,\bm z]\\
&=f_i(\bm z) +\frac{1}{n-1} \sum_{i'\neq i} E_{t_{i'}\sim \calT}[f_{i'}(\bm z)]\\
&=f_i(\bm z) +\frac{1}{n-1} \sum_{i'\neq i} \mu_\calT(\bm z) = f_i(\bm z) + \mu_\calT(\bm z).
\end{align*}

For the lower bound, we denote for every point $\bm x\in X$ its symmetric location by $\hat x=\phi(\bm x)$.
Fix some location $\bm s_i$ and some other agent $i'$.
\begin{align*}
E[d(\bm s_i,\bm s_{i'})|\bm s_i,t_{i'},\bm z] &= E_{\bm s_{i'}\sim \calY(t_{i'},\bm z)}[d(\bm s_i,\bm s_{i'})] \\
&= E_{\bm s_{i'}\sim \calY(t_{i'},\bm z)}[\frac12d(\bm s_i,\bm s_{i'}) + \frac12d(\bm s_i,\bias_{i'})] \tag{symmetry} \\
&= \frac12 E_{\bm s_{i'}\sim \calY(t_{i'},\bm z)}[d(\bm s_i,\bm s_{i'}) +  d(\bm s_i,\bias_{i'})] \\
&\geq \frac12  E_{\bm s_{i'}\sim \calY(t_{i'},\bm z)}[  d(\bm s_{i'},\bias_{i'})] \tag{triangle inequality}\\
&=\frac12  E_{\bm s_{i'}\sim \calY(t_{i'},\bm z)}[  2d(\bm s_{i'},\bm z)] \tag{definition of $\bias_{i'}$}\\
& = f_{i'}(\bm z).
\end{align*}

Similarly, for any point $\bm x$,
\begin{align*}
E[d(\bm s_i,\bm x)|t_{i},\bm z] &= E_{\bm s_{i}\sim \calY(t_{i},\bm z)}[d(\bm s_i,\bm x)] \\
&= E_{\bm s_{i}\sim \calY(t_{i},\bm z)}[\frac12d(\bm x,\bm s_{i}) + \frac12d(\bm x,\bias_{i})] \tag{symmetry} \\
&= \frac12 E_{\bm s_i\sim \calY(t_{i},\bm z)}[d(\bm x,\bm s_{i}) +  d(\bm x,\bias_{i})] \\
&\geq \frac12  E_{\bm s_{i}\sim \calY(t_{i},\bm z)}[  d(\bm s_{i},\bias_{i})] \tag{triangle inequality}\\
&=E_{\bm s_{i}\sim \calY(t_{i},\bm z)}[  d(\bm s_{i},\bm z)]  = f_{i}(\bm z).
\end{align*}
That is, the expected distance of $i$ from \emph{any} point is at least $f_i$. Therefore $E[d(\bm s_i,\bm s_{i'})|t_i,t_{i'}]\geq \max\{f_i(\bm z),f_{i'}(\bm z)\}$.
Now, it follows that 
$$E[\pi_i(S)|t_i,\bm z]  = E_{t_{i'}}[d(\bm s_i,\bm s_{i'})|t_i,t_{i'},\bm z] \geq E_{t_{i'}\sim \calT}[\max\{f_i(\bm z),f_{i'}(\bm z)\}] \geq \max\{f_i(\bm z),\mu_\calT(\bm z)\},$$
as required.
\end{proof}
The proof is actually showing a more informative lower bound: $E_{t_{i'}\sim \calT}[\max\{f_i,f_{i'}\}]$. For the best worker this would equal $\mu$, whereas for the worst worker this would equal $f_i$.  

Note that taking $\hat f_i=\pi_i-\hat \mu$ is a good heuristic (assuming $\hat\mu$ is a good estimate of $\mu$), in particular for very good or very bad workers: if $f_i\cong 0$ then due to our upper bound,  
$$\hat f_i = \pi_i-\hat \mu \leq f_i +\mu-\hat \mu \cong f_i.$$
If $f_i \gg \mu$ then due to the lower bound,
$$\hat f_i = \pi_i-\hat\mu \geq f_i - \hat\mu \cong f_i-\mu = (1-o(1))f_i.$$

We can also use this result to show that $\pi$ provides a 2-approximation of $f_i$, at least for the worse-than-average workers. 
\begin{corollary}
If $d$ is any distance metric and $\calY$ is symmetric, then  for any $i$ s.t. $f_i\geq \mu$, we have then $f_i \in [E[\pi_i]/2,E[\pi_i]]$.
\end{corollary}
This is since $f_i= \max\{\mu,f_i\}\leq E[\pi_i]$ on one hand, and $E[\pi_i]\leq f_i+\mu \leq 2 f_i$ on the other hand.  

To see that this is tight (for an Euclidean distance in one dimension), suppose that $\bm z=0$. consider $n-1$ workers with $\bm s_i$ sampled uniformly from $\{-1,1\}$, and another worker $n$ where $\bm s_n=-n$ w.p. $\frac1{2n}$, $\bm s_n=n$ w.p. $\frac1{2n}$, and otherwise $\bm s_n=0$. Then $f_i=\mu=1$ for all workers, but $E[\pi_1]=1+o(1)$ and $E[\pi_n]=2-o(1)$.

A slight modification of the above example also shows why the mean distance alone cannot fully distinct between two workers with $f=2f'$: if we change the support of worker~$n$ above to $\{-n/2,0,n/2\}$ (without changing the probabilities) then $f_n=\mu/2=f_1/2$, yet $E[\pi_1],E[\pi_n]$ both equal $1+o(1)$.

\section{Optimal estimation for the pAWG model}\label{apx:cont}
Suppose that the pairwise distances are all independent conditional on workers' types. More formally, under the \emph{pairwise Additive White Gaussian} (pAWG) noise model, 
$d_{ii'} = f_i+f_{i'} + \eps_{ii'}$, where all of $\eps_{ii'}$ are sampled i.i.d. from  a normal distribution with mean 0 and unknown variance. 


Estimating $\vec f$ from all the off-diagonal entries of the pairwise distance matrix $D$  can be done by solving the following ordinary least squares (regression) problem:\footnote{Using an OLS for this problem is inspired by the work of \cite{parisi2014ranking}, although they do not provide a closed form solution as we do here, nor do they consider regularization.}
$$\min_{\hatv f}\sum_{i,i'}(\hat f_i+\hat f_{i'}- d_{ii'})^2.$$

We will show a closed form solution to a somewhat more general claim, that also allows for a regularization term.

\begin{rtheorem}{thm:RLS}Let $\lambda\geq 0$, and  $D=(d_{i,i'})_{i,i'\in N}$ be an arbitrary symmetric nonnegative matrix (representing pairwise distances). Then
$$\argmin_{\hatv f}\sum_{i,i'\in N: i\neq i'}(\hat f_i+\hat f_{i'}- d_{ii'})^2 +\lambda \sum_{i\in N}(\hat f_i)^2 = \frac{2(n-1)\vec \pi - \frac{8n(n-1)}{4n+\lambda-4}\ol \mu}{2n+
\lambda-4}.$$
\end{rtheorem}


\begin{proof} 
Let $P$ be a list of all $n^2-n$ ordered pairs of $[n]$ (without the main diagonal) in arbitrary order. We can now rewrite the least squares problem as 
 $$\min_{\hatv f}\sum_{(i,i')\in P}(\hat f_i+\hat f_{i'}- d_{ii'})^2 +\lambda \sum_{i\in N}(\hat f_i)^2= \min_{\hatv f} \left(\|A\hatv f-\bm d\|_2^2 + \lambda  \|\hatv f\|_2^2\right),$$
 
 where $A$ is a $|P|\times n$ matrix with $a_{ki}=1$ iff $i\in P_k$, and $\bm d$ is a $|P|$-length vector with $d_k = d_{i{i'}}$ for $P_k=(i,{i'})$. 

The optimal solution for regularized least squares is obtained at $\hatv f$ such that
\labeq{RLS}{(A^T A + \lambda  I)\hatv f = A^T \bm d.}

\begin{figure}
    \centering
    $$P=\begin{array}{c}
         1,2  \\
         1,3\\
         1,4\\
         1,5\\
         2,3\\
         2,4\\
         2,5\\
         3,4\\
         3,5\\
         4,5\\
         5,4\\
         \vdots\\
         2,1
    \end{array};~~~
    A=\left(\begin{array}{ccccc}
         1&1&0&0&0  \\
         1&0&1&0&0  \\
         1&0&0&1&0  \\
         1&0&0&0&1  \\
         0&1&1&0&0  \\
         0&1&0&1&0  \\
         0&1&0&0&1  \\
         0&0&1&1&0  \\
         0&1&1&0&1  \\
         0&0&0&1&1  \\
         \hdashline
              0&0&0&1&1  \\  
              \vdots&&&\vdots\\
                       1&1&0&0&0  \\
    \end{array}\right);~~~ A^TA = \left(\begin{array}{ccccc}
         8&2&2&2&2  \\
         2&8&2&2&2  \\
         2&2&8&2&2  \\
         2&2&2&8&2  \\
         2&2&2&2&8  \\
    \end{array}\right)$$
    \caption{The matrices $A$ and $A^TA$ for $n=5$. The part below the dashed line is a reflection of the part above}
    \label{fig:A_example}
\end{figure}
Fortunately, $A$ has a very specific structure that allows us to obtain the above closed-form solution. Note that every row of $A$ has exactly two `1' entries, in the row index $i$ and column index ${i'}$ of $P_k$; the total number of `1' is $2(n^2-n)$; there are $2n-2$ ones in every column; and every two distinct columns $i,{i'}$ share exactly two non-zero entries (at rows $k$ s.t. $P_k=(i,{i'})$ and $P_k=({i'},i)$). 
This means that $(A^T A)$ has $2n-2$ everywhere on the main diagonal and $2$ everywhere else. See Fig.~\ref{fig:A_example} for a visual example.

 Thus 
\labeq{Ad}
{[A^T\bm d]_i = 2\sum_{{i'}\neq i} d_{i{i'}} = 2(n-1)\pi_i,}
and
\begin{align}
    [(A^T A+\lambda I)\hatv f]_i &= (2n-2)\hat f_i + 2\sum_{{i'}\neq i}\hat f_{i'} +\lambda\hat f_i \notag \\
    &=(2n-2+\lambda)\hat f_i+2\sum_{i'\neq i}\hat f_{i'}= (2n-4+\lambda)\hat f_i+2\sum_{i'\in N}\hat f_{i'}.\label{eq:AAf}
\end{align}
Denote 
\labeq{app_alpha1}
{\alpha := \sum_{i\in N} [A^T\bm d]_i = \sum_{i\in N} 2(2(n-1)\pi_i) =4n(n-1)\ol \mu,}

then 
\begin{align}
\alpha&= \sum_{i
\in N}[A^T\bm d]_i=\sum_{i \in N} [(A^T A+\lambda I)\hatv f]_i 
\tag{By Eq.~\eqref{eq:RLS}}\\
&=\sum_{i\in N}  [(2n-4+\lambda)\hat f_i+2\sum_{i' \in N} \hat f_{i'}] \tag{By Eq.~\eqref{eq:AAf}}\\
&= (2n-4+\lambda)\sum_{i\in N}\hat  f_i + 2\sum_{i'\in N}\hat  f_{i'} (\sum_{i\in N} 1)\notag\\
&= (2n-4+\lambda)\sum_{i\in N} \hat f_i + 2\sum_{i'\in N}\hat  f_{i'} n
= (4n+\lambda-4)\sum_{i\in N}\hat  f_i.\label{eq:app_alpha2}
\end{align}

We can now write the $n$ linear equations specifying $\hatv f$:
\begin{align*}
    &2(n-1)\pi_i=[A^T\bm d]_i =[(A^T A+\lambda I)\hatv f]_i\tag{By Eqs.~\eqref{eq:RLS},\eqref{eq:Ad}}\\
    &=(2n-4+\lambda)\hat f_i + 2\sum_{i'\in N}\hat f_{i'} &\Rightarrow\tag{By Eq.~\eqref{eq:AAf}}\\
    & \hat f_i = \frac{2(n-1)\pi_i-2\sum_{i'\in N} \hat f_{i'}}{2n-4+
    \lambda}\tag{isolating $\hat f_i$} \\
    &= \frac{2(n-1)\pi_i-2\frac{4n(n-1)}{4n-4+\lambda}\ol \mu}{2n-4+
    \lambda},  \tag{by Eqs.\eqref{eq:app_alpha1},\eqref{eq:app_alpha2}}\\
\end{align*}
as required.
\end{proof}

We can use $\hatv f$ obtained for different values of the regularizing constant $\lambda$ in the \PTD algorithm.

\begin{itemize}
    \item By setting $\lambda=0$ (no regularization), we get
$$\hatv f^{ML}=\frac{2(n-1)\vec \pi - \frac{8n(n-1)}{4n-4}\ol \mu}{2n-4} = \frac{(n-1)\vec \pi - n\ol \mu}{n-2} \propto \vec\pi - \frac{n}{n-1}\ol\mu.$$
\item  By setting $\lambda=2$ we get 
$$\hatv f^{ML(\lambda=2)}=\frac{2(n-1)\vec \pi - \frac{8n(n-1)}{4n-2}\ol \mu}{2n
-2}=\vec \pi - \frac{2n}{2n-1}\ol\mu.$$
\item  By setting $\lambda=4$ we get 
$$\hatv f^{ML(\lambda=4)}=\frac{2(n-1)\vec \pi - \frac{8n(n-1)}{4n}\ol \mu}{2n
}=\frac{n-1}{n}(\vec \pi - \ol\mu)  \propto \hatv f^{NP}.$$
\end{itemize}

\rmr{makes sense to regularize using $\|\hatv f-\vec 1\|_2^2$, then getting for $
\lambda=4$,
$$\hatv f^{MLR}=\frac{(n-1)(\bm 
\pi - \ol \mu)+1}{n}.$$
}

Recall that $\lambda=4$ gives us the na\"ive \PTD algorithm. We compare the results empirically in Fig.~\ref{fig:PTD_types}. ``Oracle" is a weighted mean that uses the true types of workers to set the weights (see \OA in Appendix~\ref{apx:empirical_more}). 

We can see that while the na\"ive variant performs better when the pairwise distances are noisy (low $m$), as $m$ grows the $ML$ variant converges to the performance of the oracle. 
\begin{figure}
    \centering
    \includegraphics[width=\textwidth]{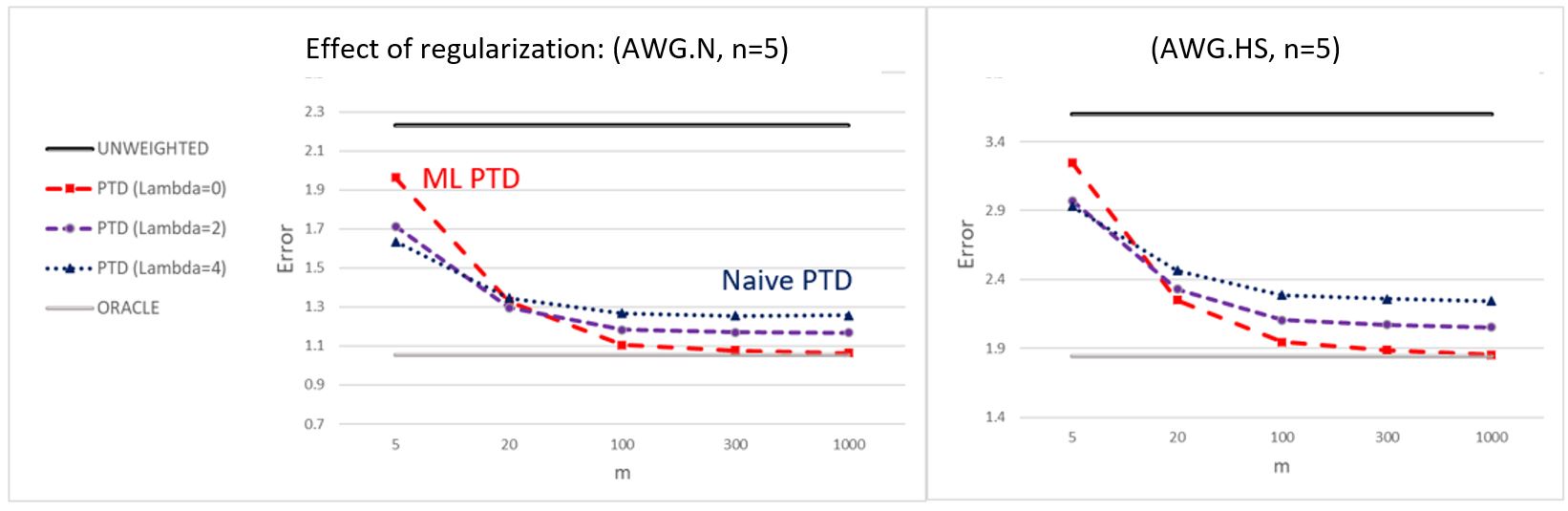}
    \caption{A comparison of the aggregation error of \PTD  with different regularization factors. Results are shown for synthetic data in the AWG model under two different distributions $\calT$.}
    \label{fig:PTD_types}
\end{figure}

\section{Sample Complexity and Consistency}\label{apx:sample}
In the main text we wrote that the error in estimating the expected disparity $E[\pi_i|t_i]$ is decreasing linearly with the sample size. Here we specify a formal upper bound. 
\begin{theorem}\label{thm:pii}
Suppose the fourth moment of each distribution  $\tilde \calY(\bm z,t), t\in \calT$ is bounded, and that questions are i.i.d. conditional on worker's type. That is, $\tilde \calY(\bm z,t)= \calZ(\bm z,t)^m$ for some some distribution $\calZ$.  Denote $\mu_{\bm s_i,\bm z}:=E_{S\sim \calY(\bm z,\calT)^n}[\pi_i(S)\mid \bm z,t_i]$ for some $\bm z\in\mathbb R^m,t_i\in \calT$. If $n,m>\Omega(\frac{1}{\delta\epsilon^2})$, we have: 

\labeq{pii}
{\Pr_{S\sim \calY(\bm z,\calT)^n}[|\pi_i(S)-\mu_{\bm s_i,\bm z}|>\epsilon \mid \bm z,t_i]<\delta.}

More specifically, let $M_t$ ($t=1,2,3,4$) denote the supremum of the $t$-th moment of any distribution  $\tilde \calY(\bm z,t), t\in \calT$. Then the bound in Eq.~\eqref{eq:pii} holds whenever 
 $n>\dfrac{24(4(M_2)^2+4M_2+2M_4)}{\delta\epsilon^2}+1$ and $m>\max\{\dfrac{12(M_4+4M_1M_3+ 4(M_1)^2 M_2)}{\delta\epsilon^2},\dfrac{3M_4}{\delta}\}$.
\end{theorem}
\begin{proof} W.l.o.g.~let $\bm z=\bm 0$. The proofs for other cases are  similar.
For any $\bm s_i$, let\\
$\mu_{\bm s_i,\bm z}=E_{S\sim \calY(\bm z,\calT)^n}[\pi_i(S)|\bm s_i,\bm z]=E_{S_{-i}\sim \calY(\bm z,\calT)^{n-1}}[\pi_i(S_{-i},\bm s_i)|\bm z]$. It follows from definition that $E(\pi_i(S))=E_{\bm s_i\sim \calY(\bm z,t_i)}(\mu_{\bm s_i,\bm z})$.
We define the following three events.
\begin{itemize}
    \item  \textbf{Event A.}  $|\pi_i(S)-\mu_{\bm s_i,\bm z}|> \epsilon_A$. The event will be used conditioned on $\bm s_i$ satisfying some properties.
    \item  \textbf{Event B.} $\left|\dfrac{\|\bm s_i\|_2^2}{m} - E_{\bm s_i\sim \calY(\bm z,t_i)}(\dfrac{\|\bm s_i\|_2^2}{m})\right| > \epsilon_B$. The event requires  $\dfrac{\|\bm s_i\|_2^2}{m}$ to be not too close to its mean, where $\bm s_i$ is drawn from $\calY(\bm z,t_i)$. If this event does not hold, then  $\dfrac{\|\bm s_i\|_2^2}{m}\le  M_2+\epsilon_B$. Later in the proof we will set $\epsilon_B=1$.
    \item \textbf{Event C.} $|\mu_{\bm s_i,\bm z} - E_{\bm s_i}(\mu_{\bm s_i,\bm z})|\le \epsilon_C$. The event requires $\mu_{\bm s_i,\bm z}$ to be not too close to its mean $E_{\bm s_i}(\mu_{\bm s_i,\bm z})=E (\pi_i(S))$, where the randomness comes from $\bm s_i$. $\mu_{\bm s_i,\bm z}$ is a function that only depends on $\bm s_i$.
\end{itemize}
In the following three claims, we give lower bounds on Events A (Claim~\ref{claim:eventa}), B (Claim~\ref{claim:eventb}), and C (Claim~\ref{claim:eventc}), respectively. The first claim states that the probability for Event A to hold is upper-bounded by a function that depends on $\frac{\|\bm s_i\|_2^2}{m}$, $n$, and $\epsilon_A$.
\begin{claim} 
\label{claim:eventa}
For any $\bm s_i$, we have $$\Pr(\text{Event A}|\bm s_i)=\Pr(|\pi_i(S)-\mu_{\bm s_i,\bm z}|> \epsilon_A|\bm s_i)<  \frac{8M_2\frac{\|\bm s_i\|_2^2}{m}+2M_4}{(n-1)\epsilon_A^2}$$
\end{claim}
\begin{proof}
For any $\bm s_i$, $\pi_i(S|\bm s_i)$ can be seen as average of $n-1$ i.i.d.~samples for $d(\bm s_i,\bm x)$, where $\bm x = (x_1,\ldots,x_m)$ is distributed as $\calY(\bm z,\calT)$. For any fixed $\bm s_i$, $d(\bm s_i,\bm x) = \frac{1}{m}\|\bm s_i\|_2^2 -\frac 2m \bm s_i\cdot\bm x +\frac{1}{m}\|\bm x\|_2^2$, where $\|\cdot\|_2$ is the $L_2$ norm.  
Therefore, the variance of $d(\bm s_i,\bm x)$ given $\bm s_i$ is the same as the variance of $-\frac 2m \bm s_i\cdot\bm x +\frac{1}{m}\|\bm x\|_2^2$, calculated as follows.
\begin{align*}
   VAR[d(\bm s_i,\bm x)|\bm s_i] &= \frac{1}{m^2}VAR[-2 \bm s_i\cdot\bm x +\|\bm x\|_2^2|\bm s_i]\\
& \le \frac1{m^2}E\{(-2 \bm s_i\cdot\bm x +\|\bm x\|_2^2)^2|\bm s_i\}\\
& \le \frac1{m^2}E\{2m(\sum_{j=1}^m(4s_{ij}^2x_j^2+x_j^4))|\bm s_i\} \text{(Cauchy-Schuwarz)}\\
&\le \frac2{m}(4M_2\sum_{j=1}^m\|\bm s_i\|_2^2+mM_4) = 8M_2\frac{\|\bm s_i\|_2^2}{m}+2M_4
\end{align*}
Recall that $M_2$ (respectively, $M_4$) is the maximum second (respectively, fourth) moment of random variables in $\calY(\bm z,\calT)$. 
The claim follows after Chebyshev's inequality.


\end{proof}

\begin{claim}\label{claim:eventb} 
For any $t_i$, we have 
$$\Pr_{\bm s_i\sim\calY(\bm z,t_i)}(\text{Event B}) = \Pr_{\bm s_i\sim\calY(\bm z,t_i)}(\left|\dfrac{\|\bm s_i\|_2^2}{m} - E_{\bm s_i\sim \calY(\bm z,t_i)}(\dfrac{\|\bm s_i\|_2^2}{m})\right| > \epsilon_B)< \frac{M_4}{m\epsilon_B^2}$$
\end{claim}
\begin{proof} Note that the components of $\bm s_{i}$ are i.i.d., whose variances are no more than $M_4$. Therefore, the variance of $\frac{\|\bm s_i\|_2^2}{m}$ is no more than $ \frac{M_4}m)$. The claim follows after Chebyshev's inequality.
\end{proof}

\begin{claim}\label{claim:eventc}
Let $C_3=M_4+4M_1M_3+ 4(M_1)^2 M_2$. 
For any $t_i$, we have 
$$\Pr_{\bm s_i\sim\calY(\bm z,t_i)}(\text{Event C}) = \Pr_{\bm s_i\sim\calY(\bm z,t_i)}(|\mu_{\bm s_i,\bm z} - E_{\bm s_i}(\mu_{\bm s_i,\bm z})|> \epsilon_C)< \frac{C_3}{m\epsilon_C^2}$$
\end{claim}
\begin{proof} For any given $\bm s_i$, we have
\begin{align*}
    \mu_{\bm s_i,\bm z} = & \frac 1m E_{\bm x\sim\calY(\bm z,\calT)}\{\sum_{j=1}^m(s_{ij}-x_j)^2\}\\
    =& \frac 1m E_{\bm x\sim\calY(\bm z,\calT)}\{\sum_{j=1}^ms_{ij}^2+\sum_{j=1}^mx_j^2+2\sum_{j=1}^ms_jx_j\}\\
    = & \frac{\|\bm s_i\|_2^2}{m} + \frac 1m\sum_{j=1}^mE(x_j^2)+\frac 2m\sum_{j=1}^ms_jE(x_j)\\
    \le &  \frac{\|\bm s_i\|_2^2}{m} +  M_2+\frac {2M_1}m\sum_{j=1}^ms_j 
\end{align*}
Recall that $M_1$ (respectively, $M_2$)  is the superium first (respectively, second) moment of random variables in $\calT$.
Now let us consider the case where $\bm s_i$ is generated from $\calY(\bm z,t_i)$. We have
\begin{align*}
    VAR(\mu_{\bm s_i,\bm z}) = & VAR(\frac{\|\bm s_i\|_2^2}{m} +  M_2+\frac {2M_1}m\sum_{j=1}^ms_j)\\
    =& \frac 1{m^2} VAR(\sum_{j=1}^m ( \bm s_{ij}^2  +   {2M_1}s_{ij}))\\
    =&\frac 1{m^2} \sum_{j=1}^m VAR( s_{ij}^2  +   {2M_1}s_{ij}) \ \ \ \ \text{(independence of $s_{ij}$'s)}\\
    \le & \frac 1m E_{s_{i1}}(s_{i1}^2  +   {2M_1}s_{i1})^2\le \frac 1m (M_4+4M_1M_3+ 4(M_1)^2 M_2)
\end{align*}
The claim follows after Chebyshev's inequality.
\end{proof}
Let $\epsilon_A  =\epsilon_C=\frac\epsilon 2$ and $\epsilon_B = 1$. We have:
\begin{align*}
    \Pr(|\pi_i-E(\pi_i)|>\epsilon) <&
\Pr(|\pi_i-\mu_{\bm s_i,\bm z}|>\frac{\epsilon}{2})+   \Pr(| \mu_{\bm s_i,\bm z}-E(\pi_i)|>\frac{\epsilon}{2})\\
=& \Pr(\text{Event A}) + \Pr( \text{Event C}) \\
\le&  \Pr( \text{Event A})+ \frac{4C_3}{m\epsilon^2} \ \ \ \ \text{(Claim~\ref{claim:eventc})}\\
\le& \Pr( \text{Event A} \& \neg \text{Event B}) +\Pr( \text{Event A}\&  \text{Event B}) + \frac{4C_3}{m\epsilon^2}\\
\le &\Pr( \text{Event A}|\neg \text{Event B})\Pr( \neg \text{Event B}) +\Pr( \text{Event B})+ \frac{4C_3}{m\epsilon^2}\\
\le &\Pr( \text{Event A}| \neg \text{Event B}) +\frac{M_4}{m\epsilon_B^2}+ \frac{4C_3}{m\epsilon^2} \ \ \ \ \text{(Claim~\ref{claim:eventb})}\\
\le & \frac{4(8M_2(M_2+1)+2M_4)}{(n-1)\epsilon^2}+\frac{M_4}{m}+ \frac{4C_3}{m\epsilon^2} \ \ \ \ \text{(Claim~\ref{claim:eventa} and $\epsilon_B=1$)}
\end{align*}
When $m$ and $n$ satisfy the conditions described in the theorem, each of the three parts is no more than $\frac\delta3$, which proves the theorem.
\end{proof}

	\paragraph{Consistency for \PTD with real-valued data} \label{apx:cont_consistent}

The theorem above, together with Cor.~\ref{cor:AK_sym} says that with enough samples, the fault under symmetric noise will be estimated correctly. 

The next theorem says that a good approximation of $\vec f$ entails a good approximation of $\bm z$, and thus guarantees that \PTD eventually gets closer to the truth with enough samples, even outside the AWG model.

Given a fault estimation $\hatv f=(\hat f_1,\ldots,\hat f_n)$, we denote the \emph{Aitkin weights} $w^A=w^A(\hatv f):=\frac{1}{\hatv f}$, following \cite{aitkin1935least}.
	
	Recall that $\hatvm z=\frac1n\sum_{i\in N}\frac{1}{\hat f_i} \bm s_i$ is the output of \PTD; and that  $\bm x^*=\frac1n\sum_{i\in N}\frac{1}{f_i} \bm s_i$ is the best linear unbiased  estimator of $\bm z$ by \cite{aitkin1935least}. 

	\begin{lemma}\label{lemma:conf_p_i}
Suppose that $\hat f_i \in (1\pm \delta) f_i$ for some $\delta\in[0,1]$. Then $w_i$ is  in the range $[ (1-\delta)w^A_i, i (1+2\delta)w^A_]$. 
\end{lemma}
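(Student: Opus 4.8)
The plan is to treat this as a direct elementary computation. In the continuous domain the weight assigned to worker $i$ by the aggregation skeleton (Algorithm~\ref{alg:skeleton_cont}) is $w_i = 1/\hat f_i$, and the optimal weight from Proposition~\ref{prop:min_square} is $w^*_i = 1/f_i$ (the normalization constant is irrelevant here, as it is common to all workers). All fault levels are strictly positive, so passing to reciprocals is order-reversing but otherwise harmless, and the whole statement reduces to two scalar inequalities about $\delta$.

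First I would unpack the hypothesis $\hat f_i \in (1\pm\delta)f_i$ as the pair of inequalities $(1-\delta)f_i \le \hat f_i \le (1+\delta)f_i$. Dividing through by the positive quantity $\hat f_i f_i$, this gives
$$\frac{w^*_i}{1+\delta} \;\le\; w_i \;=\; \frac{1}{\hat f_i} \;\le\; \frac{w^*_i}{1-\delta}.$$
Then I would sandwich the two multipliers by the claimed linear bounds. For the lower bound, $\tfrac{1}{1+\delta} \ge 1-\delta$ since $(1-\delta)(1+\delta) = 1-\delta^2 \le 1$, hence $w_i \ge w^*_i(1-\delta)$. For the upper bound, $\tfrac{1}{1-\delta} \le 1+2\delta$ is equivalent (multiplying by $1-\delta>0$) to $1 \le (1+2\delta)(1-\delta) = 1 + \delta - 2\delta^2 = 1 + \delta(1-2\delta)$, which holds whenever $\delta \le \tfrac12$; hence $w_i \le w^*_i(1+2\delta)$. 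Combining the two bounds yields $w_i \in [w^*_i(1-\delta),\, w^*_i(1+2\delta)]$ as claimed.

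There is essentially no obstacle here; the only subtlety worth flagging is that the clean linear upper estimate $1+2\delta$ for $1/(1-\delta)$ degrades as $\delta \to 1$, so the bound is really meant for $\delta$ bounded away from $1$ (it certainly holds for $\delta \le \tfrac12$, which covers the regime $\delta \le 0.25$ actually invoked in Theorem~\ref{thm:delta_cont}). I would therefore either state the lemma for $\delta \le \tfrac12$ or simply remark that the hypotheses under which it is used satisfy this comfortably.
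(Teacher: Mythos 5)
Your proof is correct and is essentially identical to the paper's: both reduce to $w_i = 1/\hat f_i$, $w^*_i = 1/f_i$ and verify the elementary inequalities $1/(1+\delta) \ge 1-\delta$ and $1/(1-\delta) \le 1+2\delta$. Your observation that the upper bound genuinely requires $\delta \le \tfrac12$ (not the stated $\delta \in [0,1]$) is a valid catch --- the paper's own proof implicitly uses $\delta - 2\delta^2 \ge 0$ in the same step, and the lemma is only invoked with $\delta \le 0.25$, so the restriction is harmless but should be stated.
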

\begin{proof} 
    \begin{align*}
    w_i &= \frac{1}{\hat f_i} \geq \frac{1}{f_i (1+\delta)} = \frac{1-\delta}{f_i (1+\delta)(1-\delta)}\\
    &= \frac{1-\delta}{f_i(1-\delta^2)} \geq \frac{1-\delta}{f_i} = (1-\delta) w^A_i.
    \end{align*}
and
    \begin{align*}
    w_i &= \frac{1}{\hat f_i}\leq \frac{1}{f_i (1-\delta)} = \frac{1+2\delta}{f_i (1-\delta)(1+2\delta)}\\
    &=  \frac{1+2\delta}{f_i (1+\delta-2\delta^2)} \leq \frac{1+2\delta}{f_i}=(1+2\delta)w^A_i. 
    \end{align*}
\end{proof}
	
	\begin{theorem}\label{thm:delta_cont}    Suppose that $\forall i\in N, \hat f_i \in (1\pm \delta) f_i$ for some $\delta\in[0,1]$; it holds that\\ 
    $d(\hatvm z,\bm z) \leq d(\bm x^*,\bm z) + O(\delta\cdot \max_{i,j}(s_{ij})^2)$.
	\end{theorem}
	
	\begin{proof}
    Note first that by Lemma~\ref{lemma:conf_p_i}, each $w_i$ is  in the range $[w^A_i (1-\delta), w^A_i (1+2\delta)]$.
    W.l.o.g., $\sum_{i\in N}w^A_i=1$. Thus
    $$ (1-\delta)\leq \sum_{i'}\hat w_{i'} \leq (1+2\delta),$$
    and 
    $$\frac{1}{\sum_{i'}\hat w_{i'}} \in (1-4\delta,1+4\delta)$$ 
    as well.
    
    Denote $\overline s = \max_{ij} s_{ij}$. For each $j \leq m $,
    \begin{align*}
        (\hat z_j-z_j)^2& = (\frac1{\sum_{i'}\hat w_{i'}}\sum_{i\in N}\hat w_i s_{ij} - z_{j})^2\\
        &=(\frac1{\sum_{i'}\hat w_{i'}}\sum_{i\in N}\hat w^A_i(1+\tau_i) s_{ij} - z_{j})^2\tag{for some $\tau_i\in [-2\delta,2\delta]$}\\
        &= (\sum_{i\in N} w^A_i(1+\tau'_i) s_{ij} - z_{j})^2\tag{for some $\tau'_i\in [-8\delta,8\delta]$}\\
        &=\sum_{i\in N}\sum_{i'\in N}w^A_i(1+\tau'_i)s_{ij}w^A_{i'}(1+\tau'_{i'})s_{i'j}+z_j^2 -2z_j\sum_{i\in N}\hat w^A_i(1+\tau'_i) s_{ij} \\
        &=\sum_{i\in N}\sum_{i'\in N}w^A_i s_{ij}w^A_{i'}s_{i'j}(1+\tau'_i + \tau'_{i'} +\tau'_i\tau'_{i'})+z_j^2 -2z_j\sum_{i\in N}\hat w^A_i(1+\tau'_i) s_{ij} \\
        &=\sum_{i\in N}\sum_{i'\in N}w^A_i s_{ij}w^A_{i'}s_{i'j}(1+\tau''_{ii'})+z_j^2 -2z_j\sum_{i\in N} w^A_i(1+\tau'_i) s_{ij}
    \end{align*}
    For $|\tau''_{ii'}| <25 \delta$.
    Similarly, 
    $$ (x^*-  z_j)^2 = \sum_{i\in N}\sum_{i'\in N}w^A_i s_{ij}w^A_{i'}s_{i'j}+z_j^2 -2z_j\sum_{i\in N} w^A_i s_{ij}.$$
    Next,
    \begin{align*}
     d&(\hatvm z,\bm z)-d(\bm x^*,\bm z) = \frac1m\sum_{j\leq m}\left((\hat z_j-z_j)^2-(x^*_j-z_j)^2\right)\\
     & =\frac1m\sum_{j\leq m}\left(\sum_{i\in N}\sum_{i'\in N}w^A_i s_{ij}w^A_{i'}s_{i'j}\right)\tau''_{ii'} + 2z_j\sum_{i\in N}w^A_is_{ij}\tau'_i\\
     &\leq 25\delta \frac1m\sum_{j\leq m}\left(\sum_{i\in N}\sum_{i'\in N}w^A_i s_{ij}w^A_{i'}s_{i'j}\right) + 16\delta \sum_{i\in N}w^A_i(\ol{\bm s})^2 \\
     &\leq 25\delta \frac1m\sum_{j\leq m}\sum_{i\in N}\sum_{i'\in N}w^A_i w^A_{i'}(\overline s)^2  + 16\delta \sum_{i\in N}w^A_i(\overline s)^2\\
     &\leq      50\delta (\overline s)^2,
    \end{align*}
    as required.
\end{proof}


 
\rmr{A cleaner proof would be using Thm~\ref{thm:pii}. However we can not use the union bound since this would require $n>\Omega(n)$, and we cannot use independence since the errors of $\pi_i,\pi_{i'}$ are correlated: if one worker is an outlier, then this may distort all of the disparities. Is it possible to remove the dependency of $n$ from Thm.~\ref{thm:pii}? or at least make it sublinear?}  

\section{Algorithms}\label{apx:algs}
\subsection{\PTD algorithms}

The following table specifies the three steps of our \PTD algorithm. Recall that the `default' \PTDD algorithm is independent of the domain.  

\begin{tabular}{||l||l||l|l||}
\hline
Step~1 \vph\vpl &\multicolumn{3}{|c||}{For all $i\in N$, $\pi_i\leftarrow \frac{1}{n-1}\sum_{i'\neq i}d(\bm s_i,\bm s_{i'})$}  \\
\hline
Implementation: \vph & \PTDD (all domains) &\PTDAK (real-valued)& \PTDAK (categorical)\\

\hline

Step~2\vpl \vph&& \multicolumn{2}{|c||}{$\bar\mu \leftarrow \frac{1}{2n}\sum_{i\in N}\pi_i$}\\
\cline{3-4}
 \vph \vpl& $\forall i\in N, \hat f_i \leftarrow \pi_i$ & $\forall i\in N,\hat f_i \leftarrow \pi_i-\frac{n}{n-1}\bar\mu$ & $\forall i\in N,\hat f_i \leftarrow \frac{\pi_i-\bar\mu}{1-\frac{k}{k-1}\bar\mu}$ \\ 
\hline
Step~3 \vph \vpl & $\forall i\in N, w_i\leftarrow \max_{i'}(\hat f_{i'})-\hat f_i$ &$\forall i\in N, w_i\leftarrow \frac{1}{\hat f_i}$& $\forall i\in N, w_i \leftarrow \log(\frac{(1-\hat f_i)(k-1)}{\hat f_i})$\\
\cline{2-4}
\vph&\multicolumn{3}{|c||}{Aggregate answers using weights $\vec w$}\\
\hline
\end{tabular}

In Step~2, we rely on the appropriate Anna-Karenina theorem to set the value of $\hat f_i$ in \PTDAK. The real-valued implementation relies on Theorem~\ref{thm:pAWG_opt} with no regularization ($\lambda=0$), and the categorical implementation on Cor~\ref{cor:AK_cat}.

In Step~3 we rely on known results from statistics regarding the best estimator of the ground truth when $\vec f$ is known: On \cite{aitkin1935least} for the AWG model (real-valued domain); and on \cite{grofman1983thirteen,ben2001optimal} for the one-coin model (categorical domain).
 
The aggregation function itself depends on the domain: we used mean (for real-valued); plurality (for categorical); and nine different voting rules (for ranking). 
On the Translations dataset where there is no aggregation, we simply used $\hatvm z=\bm s_{i^*}$ where $i^*$ is the worker with lowest $\pi_i$. On the Etch-a-Cell dataset we used pixel-wise majority of the bitmaps.

\begin{algorithm}[H]
\caption{\textsc{Iterative-Proxy-based-Truth-Discovery (\IPTD)}}
\label{alg:IPTD}
\KwIn{number of iterations $T$; dataset $S$}
\KwOut{Fault levels $\hatv f=(\hat f_i)_{i\in N}$; answers $\hatvm z$}
Compute $d_{ii'} \leftarrow d(\bm s_i,\bm s_{i'})$ for every pair of workers\;
Initialize $\vec \omega^0 \leftarrow \vec{1}$\;
\For{$t=1,2,\ldots,T$}{
For every worker $i\in N$,  set $\pi^t_i \leftarrow {\sum_{i'\neq i}\omega^{t-1}_{i'} d_{ii'}}$\;
For every worker $i\in N$, set $\omega^t_i \leftarrow \max_{i'}(\pi^t_{i'}) - \pi^t_i$\;
}

\For{every worker $i\in N$}{
	  Set $\hat f_i \leftarrow \pi^T_i$\;
Set $w_i\leftarrow \max_{i'}(\hat f_{i'})-\hat f_i$\;
}
Aggregate answers using weights $\vec w$\;
\end{algorithm}

\subsection{Benchmark algorithms}
\paragraph{Distance-based-truth-discovery (\DTD)} This is a simple baseline  that first uses the unweighted aggregation rule to compute an interim estimation $\bm z^0$ of the ground truth, then estimates the fault as $\hat f_i := d(\bm s_i, \bm z^0)$, and sets the weights as in Step~3 of \PTD. Various versions of \DTD appear in the literature, going back at least to \cite{grofman1983thirteen}. 

\medskip
For two algorithms, we used the original code by the authors:
\begin{description}
\item[Multidimensional Annotation Scaling (\MAS)]
We obtained the code of the MAS algorithm with the courtesy of the authors of \cite{braylan2020modeling}. We ran it without changing the meta-parameters. We did not partition datasets into subtasks. All meta parameters were set as recommended in the paper.
\item[Kernel Density Estimation from multiple sources (\KDEm)]For \KDEm we used the code provided with paper~\cite{wan2016truth} (\KDEm-d variation) for real-valued data.
\end{description}

We implemented the other algorithms based on the pseudocode given in the respective papers:
\begin{description}
\item[Conflict Resolution on
Heterogeneous Data (\CRH)] The implementation is following~\cite{li2014resolving}, using the solutions given in the paper for real-valued and categorical data;
\item[Belief Propagation Truth discovery (\BPTD)] This is an EM-style algorithm that iteratively estimates the probability of each binary label to be 1 or 0, and the competence of each worker~\cite{karger2011iterative}. The algorithm in the paper also had a random initialization step, that made all the results strictly worse. We thus used uniform initialization of weights as in \IPTD; 
\item[\GTM] Based on \cite{zhao2012probabilistic}. the paper takes a Bayesian approach and assumes an inverse Gamma prior distribution with parameters $(\alpha, \beta)$ for the sources' accuracy and a normal prior with parameters $(\mu_0, \sigma_0$) for the ground truth. In this paper, we used the parameters which have shown the best results among the different prior parameters the authors used for their empirical evaluation; 

\item[Dawid-Skene Truth Discovery (\DSTD)]Our implementation is based on the pseudocode in~\cite{gao2013minimax}. It does not include the initialization phase (we initialize uniform weights as in \IPTD);
\item[Eigenvector Truth Discovery (\EVTD)]~\cite{parisi2014ranking} (the symmetric version) This algorithm uses the off-diagonal covariance matrix of workers by first filling in values for the diagonal (there are several suggestions for this in the paper, we used the one recommended by the authors), then calculates the leading eigenvector of the matrix and uses it as weights; 
\item[Expert-Core (EXP)] The paper~\cite{kawase2019graph} divides the estimation problem into two parts: constructing a similarity graph (relevant only for Categorical labels); and finding a dense subgraph (relevant to any domain). In all non-categorical domains we used $\max d-d_{ii'}$ as the similarity.   
\item[Top-2] this is another variation of \TTEXP~\cite{kawase2019graph} (a fixed size of 2 expert core);
\item[\CATD]Based on \cite{li2014confidence}. The authors calculate a $1-\alpha$ confidence interval to their estimation of each of the sources' accuracy, and use the upper limit of each confidence interval to weigh a source's answer, in this paper we used $\alpha = .05$.
\end{description}

In all iterative algorithms (including \IPTD), we placed an upper bound of 15 on the number of iterations (typically algorithms converged much before the limit). In addition, we did not allow negative weights and replaced them with 0 \tsv{Should we mention that we didn't do any of the outlier removal for any algorithm?}.

\section{Empirical Appendix}\label{apx:empirical_more}
\subsection{Datasets}
\begin{description}
\item[Buildings] Subjects were asked to mark the height of the building in the picture on a slide bar (the ranking dataset was obtained from the same data). Participants were given short instructions,\rmr{ removed to maintain anonymity \footnote{see survey in \href{https://goo.gl/W47X5R}{https://goo.gl/W47X5R}.} }  then they had to answer $m=25$ questions. We recruited participants through Amazon Mechanical Turk. 
Participants who answered correctly the gold standard question received a payment of $\$0.3$.  Participants did not receive bonuses for accuracy.  The study protocol was approved by the Institutional Review Board at our institution.

\item[Triangles] These two datasets are from a study of people's geometric reasoning~\cite{hart2018statistical}.\footnote{\url{https://github.com/StatShapeGeometricReasoning/StatisticalShapeGeometricReasoning}}
Participants in the study were shown the base of a triangle (two vertices and their angles) and were asked to position the third vertex. That is, their answer to each question is an x-coordinate and y-coordinate of a vertex. In our analysis we treat each of the coordinates as a separate question. 

\item[GG, DOGS, FLAGS] 
Three datasets from \cite{shah2015double}, in those experiments workers had to identify an object in pictures.  Workers got paid by the Double or Nothing payment scheme described in their paper (they had a fourth dataset, \textbf{HeadsOfCountries}, where almost all subjects got perfect answers, and hence we did not use this dataset).
\item[DOTS, PUZZ] Datasets from \cite{mao2013better}, in those experiments workers had to rank four alternatives.  The two datasets consists 6400 rankings from 1300 unique workers.  Workers got paid 0.1\$ per ranking.
\item[Predict] Curated by \cite{mandal2020}. Subjects were asked to assess the likelihood that 18 events will occur (in particular to predict whether or not they will occur), e.g., ``Will D.C. United win the D.C. United vs. FC Dallas soccer game  on Sat Oct. 13th?"
Subject were assigned to four treatments of the payment method: (1) according to Brier scoring rule (207 workers); (2) According to peer-prediction (220 workers); (3)+(4) according to combinations of the two (238 and 226 workers, respectively). 
 The datasets were collected at different dates (8-13 Oct.). 
 We display results for the first date in the dataset. The results for other dates are similar.
 \item[English, Chinese, Science, IT, Medicine, Pokemon] Datasets from \cite{kawase2019graph}.
 \item[EMO] The Emotions dataset from \cite{snow2008cheap}.
 
 \item[Translations]  The dataset, that was made available to us by the authors of \cite{braylan2020modeling}, contains translations of 100 sentences from Japanese to English (in \cite{braylan2020modeling} they used a similar dataset of Urdu-English). Each sentence has 10 translations by different workers, and was used (in our study) as a separate dataset. The distance to the ground truth (and among pairs of workers) was determined by GLEU score~\cite{wu2016google}.

\item[ETCH-a-CELL] The  ETCH-a-CELL dataset contains bitmaps of the outline of a tumor in 2D slices of a cell~\cite{spiers2021deep}. There are 235 slices, with between 10 and 30 labels of each slice.
To measure the distance between pairs of labels, we first filled the hollow outline using standard algorithms, and then  used Jaccard distance on the filled shapes. We aggregated labels either by taking the best worker (according to the algorithm), or using weighted pixelwise-majority, where weights are determined by the used truth-discovery algorithm. 

\end{description}
All datasets curated by us are attached to the submission.


\newpar{Synthetic data}
For synthetic  data we generated instances from the AWG model (real-valued); the one-coin model (categorical); or Mallows model  with  6 candidates (ranking). 
\begin{description}
\item[Real-valued]The ground truth in the AWG model was sampled from $N(0,2)$.  For the proto-population $\calT$  we used either a truncated Normal distribution with $f_i\sim N(1,0.5)$, clipped to nonnegative values (SYN.N); or noisy variant of the ``Hammer-Spammer" distribution (SYN.HS). The fault values for hammers and spammers were 0.2 and 1, respectively.
\item[Categorical, k=2] Proto-population was a truncated Normal distribution $N(0.35,0.15)$; or a Hammer-spammer distribution (20\% hammers, fault levels 0.2 and 0.5).
\item[Categorical, k=4] Proto-population was a truncated Normal distribution $N(0.6,0.3)$;
\item[Ranking] We used Mallows distribution with parameter $\phi$.  The proto population (distribution of $\phi_i$) was either a clipped Normal distribution (we used $N(0.65,0.15)$ or a Hammer-spammer distribution (20\% hammers with $\phi_i=0.3$, and 80\% spammers with $\phi_i=0.99$ which is just slightly better than random).
\end{description}

\medskip


\paragraph{Correlation of disparity and error}
\begin{figure}
    \centering
    \includegraphics[width=0.8\textwidth]{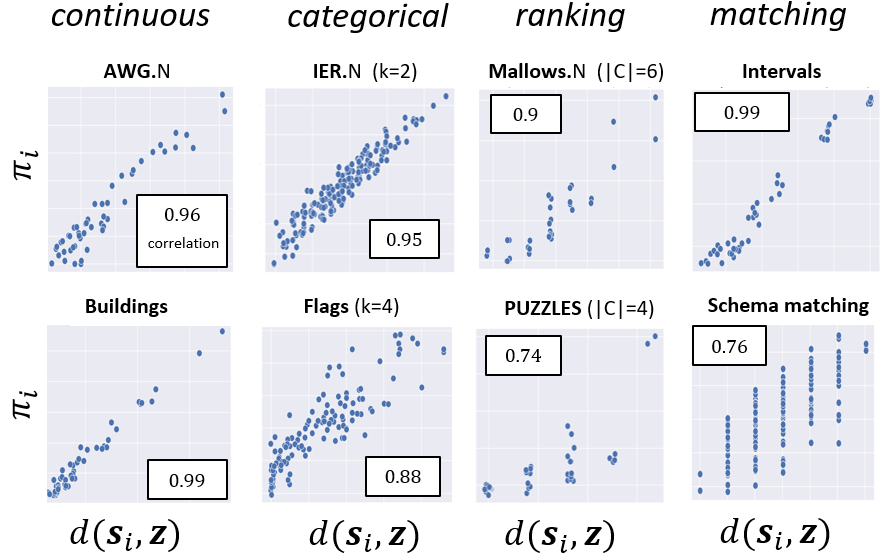}
    \caption{Each scatterplot presents all workers in a single instance, with their disparity $\pi_i$ vs. their real error $d(\bm s_i,\bm z)$.}
    \label{fig:correlation_details}
\end{figure}
Fig.~\ref{fig:corr_all_main} in the main text shows the correlation of workers' empirical disparity $\pi_i$ with their distance from the ground truth. 

Fig.~\ref{fig:correlation_details} is the same figure, but now specifying the dataset we used, and also synthetic distributions. 

The two datasets not already described above are \textbf{schemaMatching}, where users were asked to align the attributes of two database schemata~\cite{shraga2018type}; and \textbf{Intervals} which is a synthetic dataset we constructed where $\bm z$ is a set of intervals, and each report $\bm s_i$ is a noisy version of it. Both use the Jaccard distance metric. 





\subsection{Additional Results}
Here we add tables and figures showing performance of all algorithms on the same datasets under different numbers of workers and questions.



\begin{table}[t]
    \centering
    \begin{tabular}{|l|l||l|l|l|l||l|l|l||l|l|l|}
    \hline
    \multicolumn{12}{|c|}{Real-valued data, $m=15, n=30$}\\
    \hline
Real-valued&&{\CRH}&{\CATD}&{\GTM}&{\KDEm}&{\TTEXP}&{\TT}&{\MAS}&{\PTDD}&{\PTDAK}&{\IPTD}\\					\hline
SYN.N&\irrel&{-33\%}&{-61\%}&{-27\%}&\aboveUA{+7\%}&{-12\%}&{-22\%}&\aboveUA{+52\%}&{-38\%}&\best{-70\%}&{-43\%}\\					
BUILD&\irrel&{-19\%}&{-20\%}&{-22\%}&\best{-34\%}&{-29\%}&\aboveUA{+68\%}&{-25\%}&{-21\%}&{-25\%}&\inrange{-33\%}\\					
TRI1&\irrel&{-18\%}&{-7\%}&{-18\%}&\aboveUA{+8\%}&\best{-27\%}&\aboveUA{+68\%}&\aboveUA{+27\%}&{-18\%}&{-8\%}&{-21\%}\\					
TRI2&\irrel&{-11\%}&{-22\%}&{-8\%}&{-19\%}&{-17\%}&\best{-26\%}&\aboveUA{+2\%}&{-11\%}&{-8\%}&{-17\%}\\	
\hline
    \multicolumn{12}{|c|}{Real-valued data, $m=50, n=10$}\\
\hline
    Real-valued&&{\CRH}&{\CATD}&{\GTM}&{\KDEm}&{\TTEXP}&{\TT}&{\MAS}&{\PTDD}&{\PTDAK}&{\IPTD}\\					\hline
SYN.N&\irrel&{-37\%}&{-32\%}&{-38\%}&\aboveUA{+32\%}&{-15\%}&{-14\%}&\aboveUA{+77\%}&{-34\%}&\best{-62\%}&{-40\%}\\					
TRI1&\irrel&\inrange{-31\%}&{-5\%}&\best{-32\%}&\aboveUA{+1\%}&{-28\%}&\aboveUA{+26\%}&\aboveUA{+78\%}&{-29\%}&{-10\%}&{-30\%}\\					
TRI2&\irrel&{-14\%}&{-10\%}&{-12\%}&{-0\%}&\best{-19\%}&{-11\%}&\aboveUA{+21\%}&{-11\%}&{-10\%}&\inrange{-17\%}\\					
EMO&\irrel&\aboveUA{+4\%}&\aboveUA{+47\%}&\aboveUA{+6\%}&\aboveUA{+76\%}&\aboveUA{+20\%}&\aboveUA{+66\%}&\aboveUA{+15\%}&\aboveUA{+4\%}&\aboveUA{+12\%}&\aboveUA{+7\%}\\	
\hline
    \end{tabular}
    \caption{$m=50, n=10$}
    \label{tab:my_label}
\end{table}

\begin{table}[]
    \centering
        \begin{tabular}{|l|l||l|l|l|l||l|l|}
    \hline
Ranking&v. rule&{\DTD}&{\TTEXP}&{\TT}&{\MAS}&{\PTDD}&{\IPTD}\\		\hline	
SYN.HS&Veto&\aboveUA{+2\%}&{-5\%}&{-0\%}&\best{-9\%}&{-6\%}&\inrange{-7\%}\\			
SYN.N&Veto&\aboveUA{+5\%}&{-0\%}&\aboveUA{+51\%}&{-3\%}&\inrange{-9\%}&\best{-10\%}\\			
BUILD&Veto&\aboveUA{+1\%}&\aboveUA{+6\%}&\aboveUA{+51\%}&{-5\%}&\best{-7\%}&\inrange{-6\%}\\			
DOTS3&Veto&\aboveUA{+3\%}&{-5\%}&\aboveUA{+10\%}&{-8\%}&{-10\%}&\best{-14\%}\\			
DOTS5&Veto&\aboveUA{+5\%}&{-8\%}&\aboveUA{+16\%}&{-7\%}&{-14\%}&\best{-17\%}\\			
DOTS7&Veto&\aboveUA{+4\%}&{-9\%}&\aboveUA{+52\%}&{-5\%}&{-16\%}&\best{-21\%}\\			
DOTS9&Veto&\aboveUA{+10\%}&{-7\%}&\aboveUA{+48\%}&{-7\%}&{-18\%}&\best{-23\%}\\			
PUZZ5&Veto&\aboveUA{+4\%}&{-14\%}&\aboveUA{+40\%}&{-16\%}&{-22\%}&\best{-29\%}\\			
PUZZ7&Veto&\aboveUA{+7\%}&{-4\%}&\aboveUA{+59\%}&{-11\%}&\inrange{-21\%}&\best{-22\%}\\			
PUZZ9&Veto&\aboveUA{+3\%}&{-7\%}&\aboveUA{+28\%}&{-15\%}&{-15\%}&\best{-19\%}\\			
PUZZ11&Veto&\aboveUA{+6\%}&{-3\%}&\aboveUA{+16\%}&{-6\%}&{-10\%}&\best{-13\%}\\			
	\hline		
SYN.N&Best worker&{-12\%}&{-21\%}&{-72\%}&{-87\%}&\inrange{-89\%}&\best{-89\%}\\			
BUILD&Best worker&{-9\%}&{-20\%}&{-52\%}&{-51\%}&\inrange{-57\%}&\best{-57\%}\\			
DOTS3&Best worker&{-19\%}&{-9\%}&{-43\%}&{-53\%}&\inrange{-59\%}&\best{-59\%}\\			
PUZZ5&Best worker&{-25\%}&{-37\%}&{-92\%}&\inrange{-94\%}&\inrange{-93\%}&\best{-94\%}\\			
	\hline		
SYN.N&Kemeny-Young&{-2\%}&\aboveUA{+4\%}&\aboveUA{+176\%}&\aboveUA{+1\%}&\best{-8\%}&\inrange{-8\%}\\			
BUILD&Kemeny-Young&{-1\%}&{-1\%}&\aboveUA{+12\%}&\best{-4\%}&{-1\%}&{-1\%}\\			
DOTS3&Kemeny-Young&\aboveUA{+1\%}&\aboveUA{+2\%}&\aboveUA{+37\%}&\inrange{-2\%}&\best{-4\%}&\inrange{-2\%}\\			
PUZZ5&Kemeny-Young&{-2\%}&\aboveUA{+1\%}&\aboveUA{+6\%}&\aboveUA{+23\%}&{-23\%}&\best{-27\%}\\			

\hline
\end{tabular}
    \caption{Results (RI) on ranking data, $n=10$, additional voting rules.}
    \label{tab:RANK_n10}
\end{table}

\begin{table}[]
    \centering
        \begin{tabular}{|l|l||l|l|l|l||l|l|}
    \hline
Ranking&v. rule&{\DTD}&{\TTEXP}&{\TT}&{\MAS}&{\PTDD}&{\IPTD}\\		\hline	
SYN.HS&Borda&{-20\%}&{-1\%}&\best{-58\%}&{-43\%}&{-7\%}&{-20\%}\\			
SYN.N&Borda&{-10\%}&{-2\%}&\aboveUA{+91\%}&\best{-27\%}&{-12\%}&{-17\%}\\			
BUILD&Borda&\aboveUA{+2\%}&\aboveUA{+0\%}&\aboveUA{+0\%}&\aboveUA{+0\%}&\aboveUA{+0\%}&\aboveUA{+0\%}\\			
DOTS3&Borda&\inrange{-0\%}&\aboveUA{+2\%}&\aboveUA{+44\%}&\aboveUA{+2\%}&\best{-2\%}&\aboveUA{+1\%}\\			
DOTS5&Borda&{-2\%}&{-2\%}&\aboveUA{+60\%}&\aboveUA{+25\%}&{-8\%}&\best{-10\%}\\			
DOTS7&Borda&\aboveUA{+0\%}&\aboveUA{+6\%}&\aboveUA{+127\%}&\aboveUA{+18\%}&\best{-12\%}&\inrange{-11\%}\\			
DOTS9&Borda&{-5\%}&\aboveUA{+3\%}&\aboveUA{+86\%}&{-22\%}&{-23\%}&\best{-30\%}\\			
PUZZ5&Borda&\aboveUA{+1\%}&{-7\%}&{-36\%}&{-40\%}&{-29\%}&\best{-44\%}\\			
PUZZ7&Borda&\aboveUA{+5\%}&\aboveUA{+18\%}&\aboveUA{+139\%}&\aboveUA{+18\%}&\best{-13\%}&{-2\%}\\			
PUZZ9&Borda&\aboveUA{+1\%}&{-6\%}&\aboveUA{+28\%}&{-9\%}&{-13\%}&\best{-20\%}\\			
PUZZ11&Borda&{-0\%}&\aboveUA{+3\%}&\aboveUA{+23\%}&\best{-20\%}&{-4\%}&{-5\%}\\			
	\hline		
SYN.N&Plurality&\aboveUA{+3\%}&{-8\%}&\aboveUA{+171\%}&{-11\%}&\inrange{-12\%}&\best{-14\%}\\			
BUILD&Plurality&{-2\%}&{-0\%}&\aboveUA{+61\%}&\best{-13\%}&{-6\%}&{-6\%}\\			
DOTS3&Plurality&\aboveUA{+6\%}&\aboveUA{+1\%}&\aboveUA{+91\%}&\inrange{-5\%}&\inrange{-5\%}&\best{-6\%}\\			
PUZZ5&Plurality&\aboveUA{+6\%}&{-16\%}&\aboveUA{+126\%}&{-23\%}&{-27\%}&\best{-37\%}\\			
	\hline		
SYN.N&Copeland&{-6\%}&{-9\%}&\aboveUA{+107\%}&{-18\%}&\best{-31\%}&\inrange{-31\%}\\			
BUILD&Copeland&\aboveUA{+2\%}&\inrange{-0\%}&\inrange{-0\%}&\aboveUA{+1\%}&\best{-1\%}&\inrange{-0\%}\\			
DOTS3&Copeland&{-1\%}&{-6\%}&\aboveUA{+10\%}&{-18\%}&\best{-23\%}&\inrange{-21\%}\\			
PUZZ5&Copeland&{-1\%}&{-20\%}&{-22\%}&{-41\%}&{-44\%}&\best{-46\%}\\			
\hline
\end{tabular}
    \caption{Results (RI) on ranking data, $n=30$.}
    \label{tab:RANK_n30}
\end{table}

\begin{figure}
    \centering
    \includegraphics[width=0.48\linewidth]{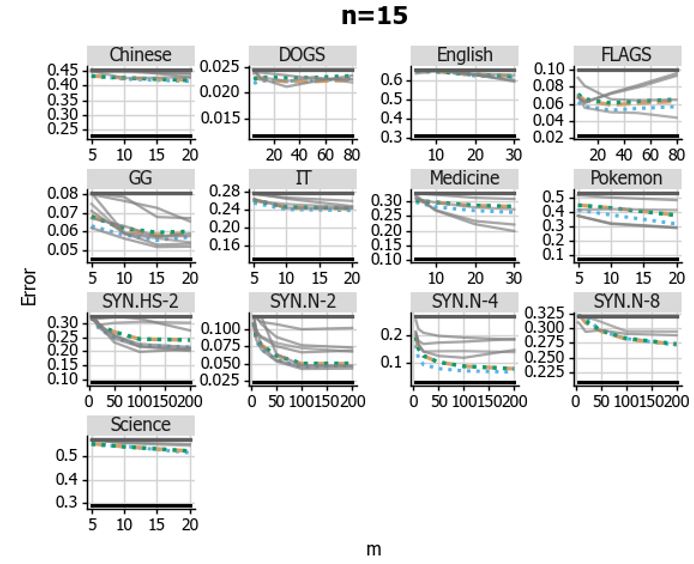}
    \includegraphics[width=0.48\linewidth]{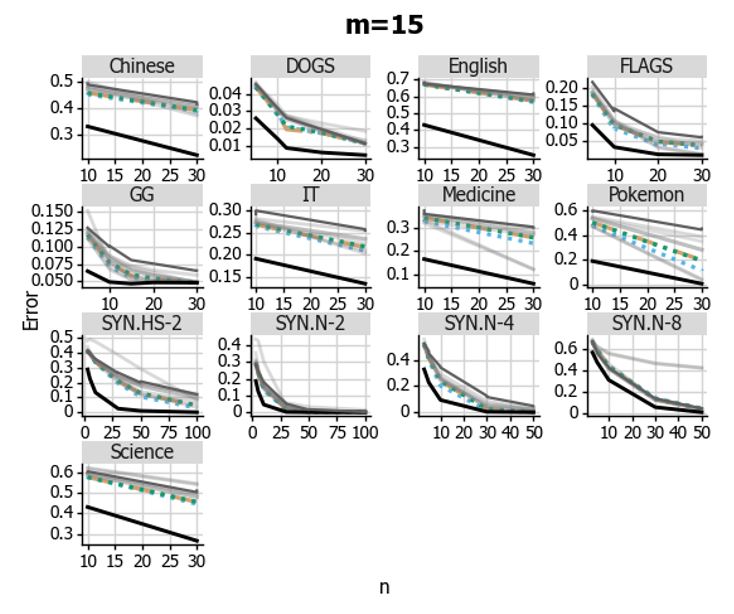}
    \caption{Error on all categorical datasets.}
    \label{fig:CAT_all}
\end{figure}

\begin{figure}
    \centering
    \includegraphics[width=0.48\linewidth]{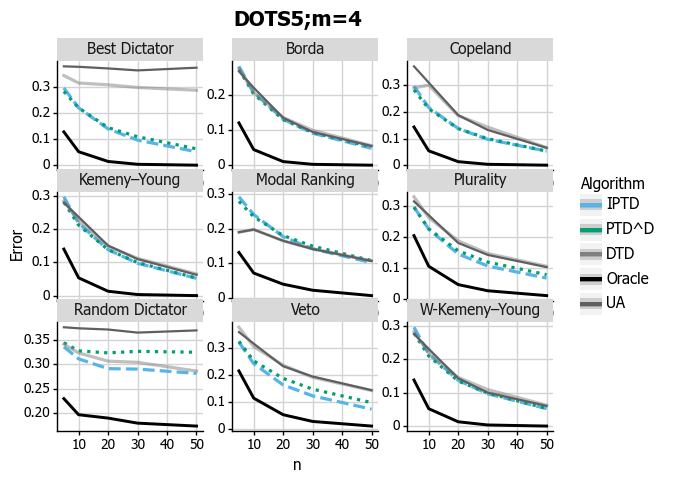}
    \includegraphics[width=0.48\linewidth]{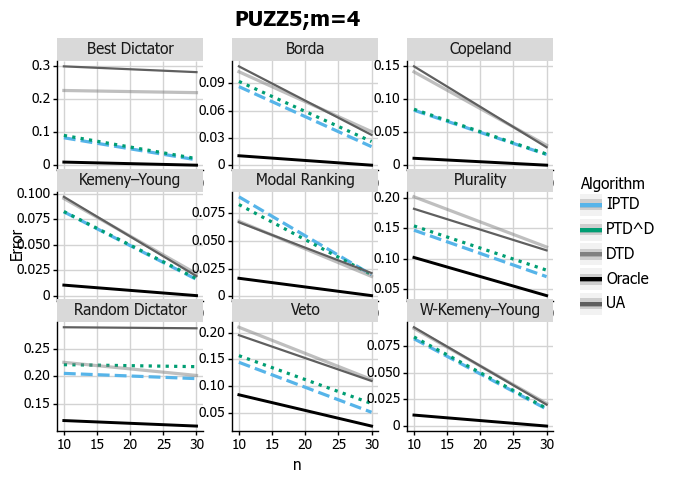}
    \includegraphics[width=0.48\linewidth]{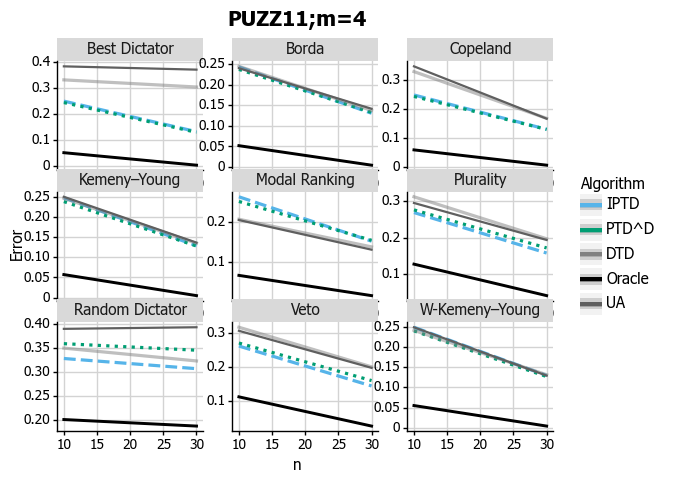}
    \includegraphics[width=0.48\linewidth]{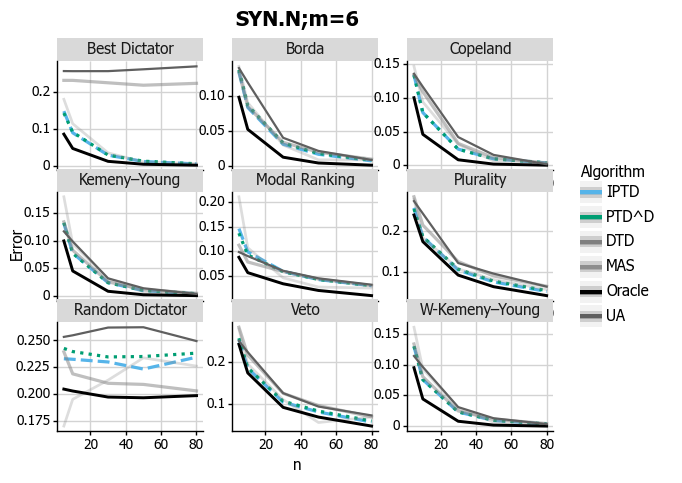}
    \caption{Error on  more ranking datasets, under all nine voting rules, 4 candidates.}
    \label{fig:RANK_more}
\end{figure}

\begin{figure}
    \centering
    \includegraphics{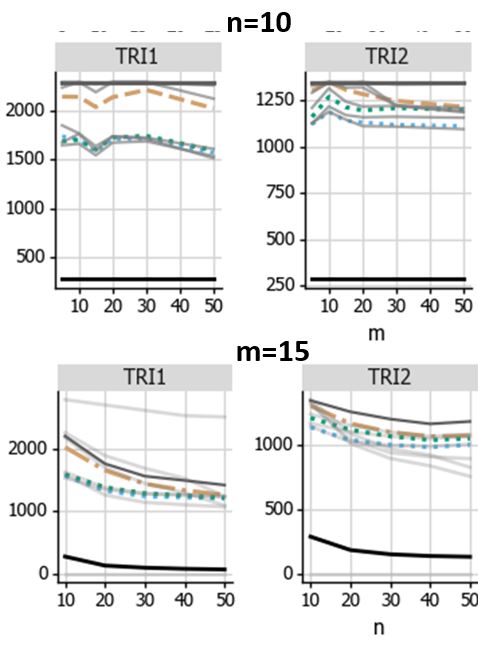}
    \caption{Error on the real-valued TRI datasets.}
    \label{fig:TRI_all}
\end{figure}
}{}
\end{document}